\documentclass[lettersize,journal]{IEEEtran}
\usepackage{amsmath,amsfonts}
\usepackage{algorithmic}
\usepackage{algorithm}
\usepackage{array}
\usepackage[caption=false,font=normalsize,labelfont=sf,textfont=sf]{subfig}
\usepackage{textcomp}
\usepackage{stfloats}
\usepackage{url}
\usepackage{verbatim}
\usepackage{graphicx}
\usepackage{cite}

\usepackage{xcolor}
\usepackage{xspace}
\usepackage{booktabs} 
\usepackage{multirow}
\usepackage{enumitem}
\usepackage{subcaption}
\usepackage[table]{xcolor}

\usepackage{amsthm}
\usepackage{amsmath}
\usepackage{amssymb}

\newtheorem{lemma}{Lemma}
\newtheorem{corollary}{Corollary}
\newtheorem{proposition}{Proposition}

\usepackage{tcolorbox}

\definecolor{beaublue}{rgb}{1.0, 1.0, 1.0}
\definecolor{blackish}{rgb}{0.2, 0.2, 0.2}

\definecolor{myblue}{HTML}{508FC9}

\usepackage{amsmath,amsfonts,bm}

\def\eqref#1{equation~\ref{#1}}

\def\1{\bm{1}}

\def\vf{{\bm{f}}}

\def\vp{{\bm{p}}}

\def\vs{{\bm{s}}}

\def\vx{{\bm{x}}}

\def\vz{{\bm{z}}}

\DeclareMathAlphabet{\mathsfit}{\encodingdefault}{\sfdefault}{m}{sl}
\SetMathAlphabet{\mathsfit}{bold}{\encodingdefault}{\sfdefault}{bx}{n}

\def\gD{{\mathcal{D}}}

\def\gX{{\mathcal{X}}}
\def\gY{{\mathcal{Y}}}

\makeatletter
\DeclareRobustCommand\onedot{\futurelet\@let@token\@onedot}
\def\@onedot{\ifx\@let@token.\else.\null\fi\xspace}

\def\eg{\emph{e.g}\onedot} 
\def\ie{\emph{i.e}\onedot} 
 
 \def\vs{\emph{vs}\onedot}

\makeatother

\begin{document}

\title{Trust-Aware Joint Feature-Prediction Discrepancy \\for Robust Domain Adaptation
}

\author{Xi Ding$^\dagger$, Lei Wang$^\dagger$, Syuan-Hao Li, Yongsheng Gao
\thanks{
This work is supported in part by the Australian Research Council (ARC) under Industrial Transformation Research Hub Grant IH180100002. (Corresponding author: Yongsheng Gao.)

$\dagger$ are co-first authors with equal contribution.

Xi Ding, Lei Wang, Syuan-Hao Li, and Yongsheng Gao are with the School of Engineering and Built Environment, Griffith University, Australia (emails: x.ding@griffith.edu.au; l.wang4@griffith.edu.au; syuan-hao.li@griffithuni.edu.au; yongsheng.gao@griffith.edu.au). 
}}



\maketitle

\begin{abstract}
Domain adaptation aims to mitigate performance degradation caused by distribution shifts between a labeled source domain and an unlabeled or sparsely labeled target domain. Most existing approaches estimate domain discrepancy either in feature space or in prediction space. However, these single-perspective strategies overlook a critical problem under domain shift: the reliability of the signals used for alignment. In practice, both learned representations and semantic predictions may become unreliable, and treating all target samples equally can lead to misleading alignment and suboptimal transfer.
In this work, we introduce trust-aware domain adaptation, a principled framework that models domain discrepancy through the reliability of feature and prediction signals. Central to our approach is the Joint Feature-Prediction Discrepancy (JFPD), a unified formulation that jointly captures representation divergence and prediction divergence while weighting their contributions by sample-specific trust. Trust is quantified through two complementary mechanisms: \textit{uncertainty-aware trust}, derived from prediction entropy to suppress unreliable predictions, and \textit{semantic-alignment trust}, computed from prototype similarity in feature space to emphasize well-aligned representations. By prioritizing confident and semantically consistent samples while down-weighting noisy or ambiguous ones, JFPD provides a reliability-aware estimate of domain discrepancy.
We further integrate JFPD into a training objective that guides adaptation toward trustworthy regions of the target domain. Extensive experiments on standard digit and object recognition benchmarks demonstrate that the proposed framework consistently achieves superior adaptation performance and yields discrepancy estimates that correlate with target-domain error. 
This work addresses, for the first time, the importance of modeling trust in the interaction between feature representations and predictions for robust domain adaptation.
\end{abstract}

\begin{IEEEkeywords}
Fine-tuning, trust-aware learning, cross-domain transfer, trust-aware loss.
\end{IEEEkeywords}

\begin{figure}[tbp]
\centering
\includegraphics[width=0.48\linewidth]{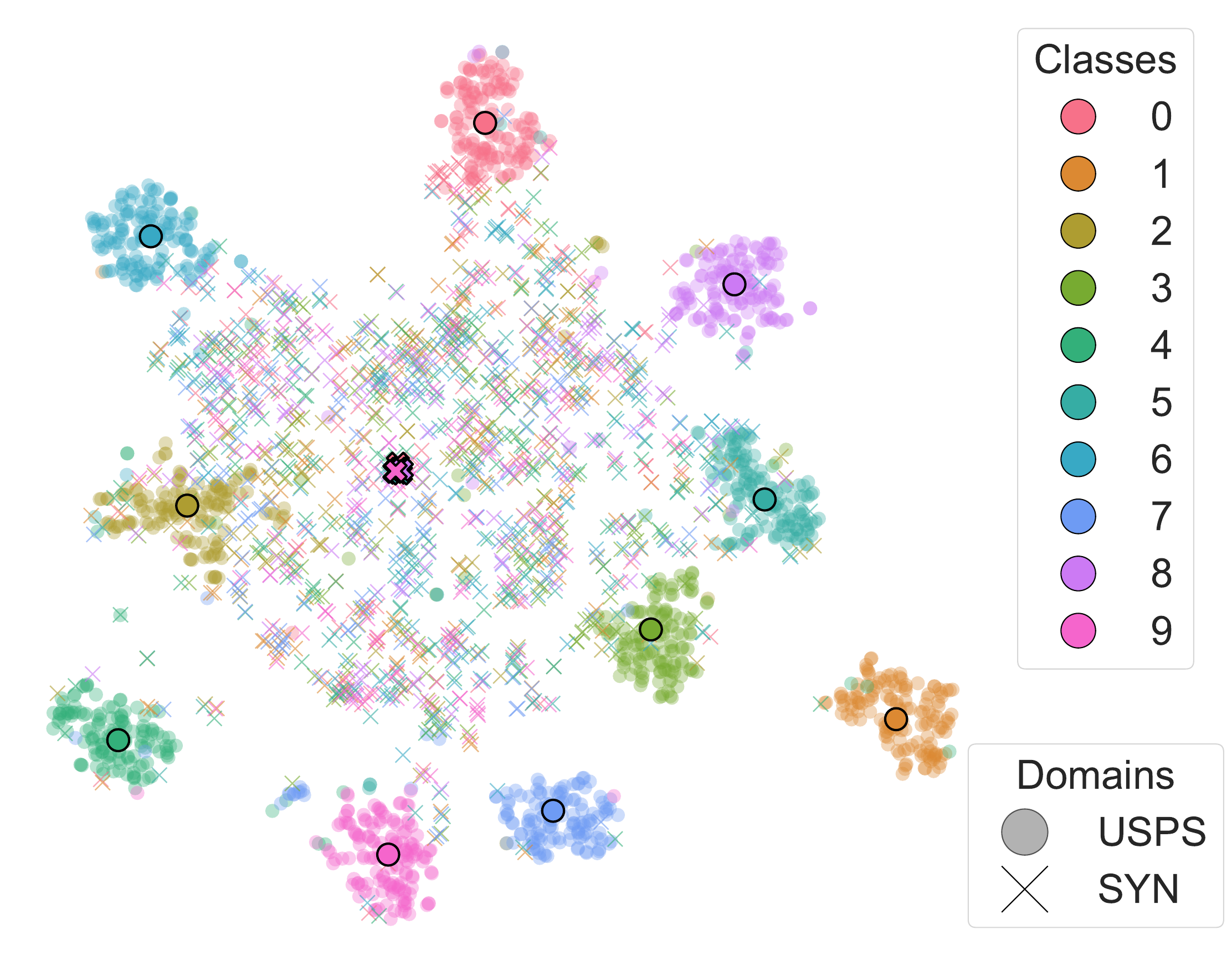}
\includegraphics[width=0.48\linewidth]{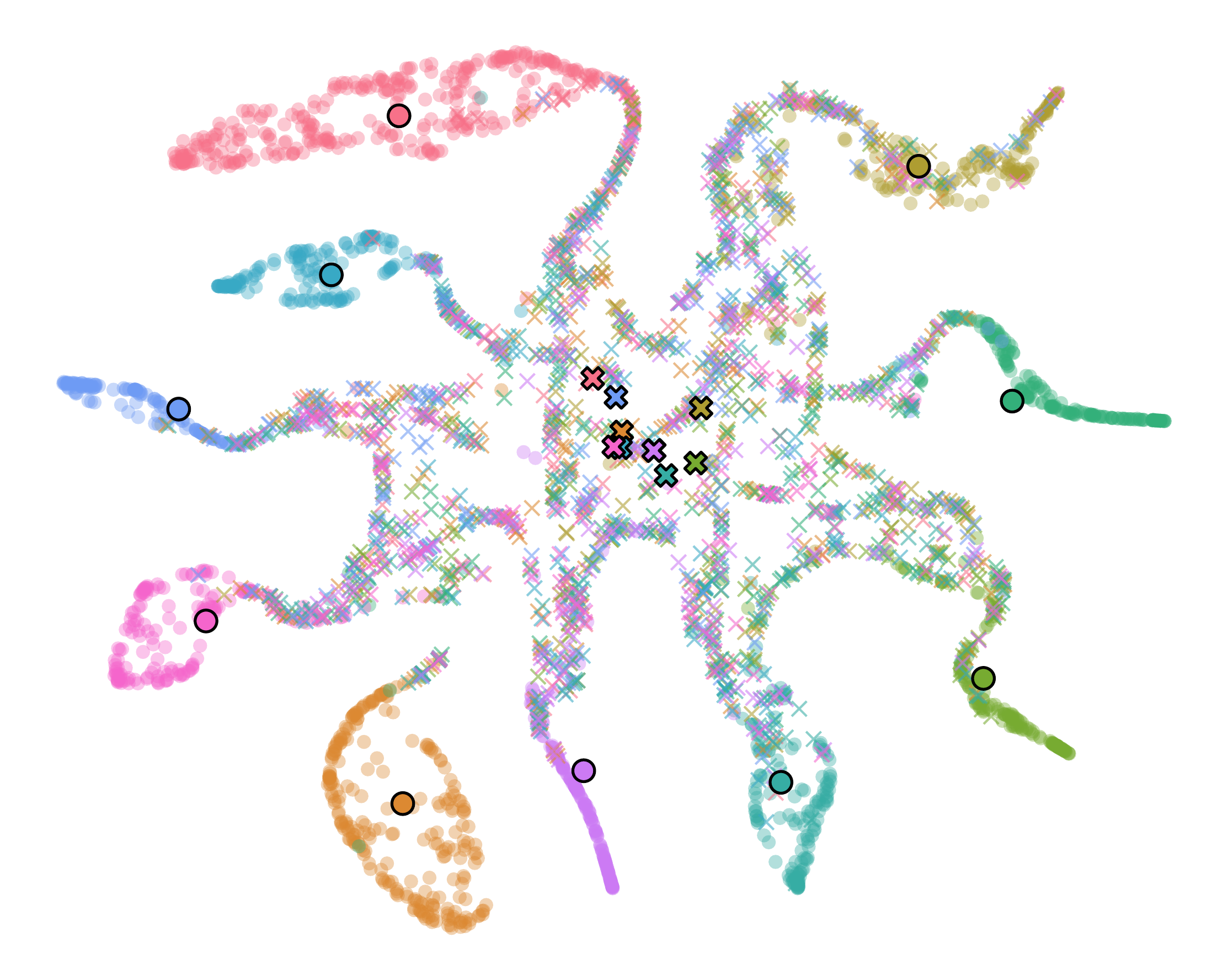}\\
\includegraphics[width=0.48\linewidth]{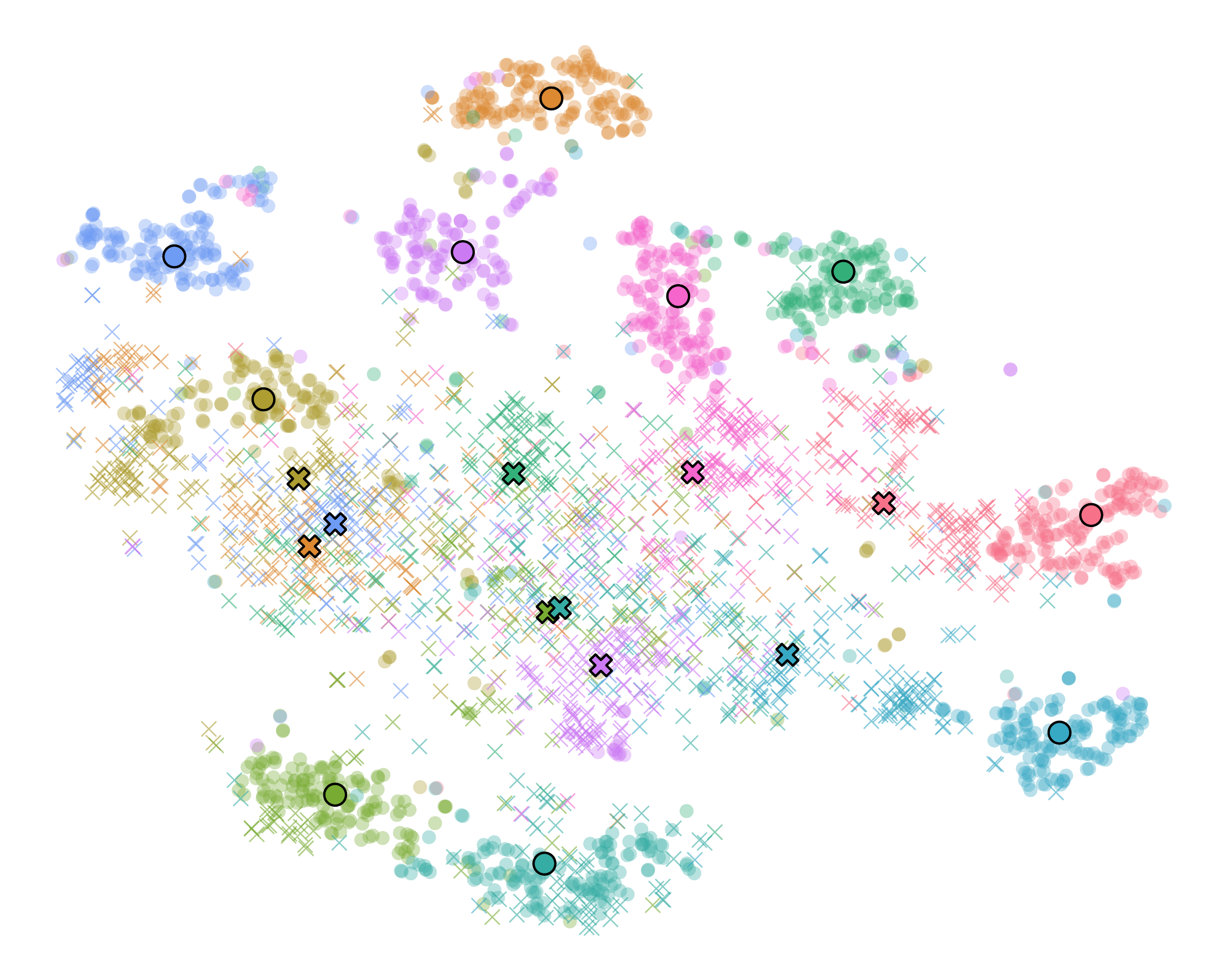}
\includegraphics[width=0.48\linewidth]{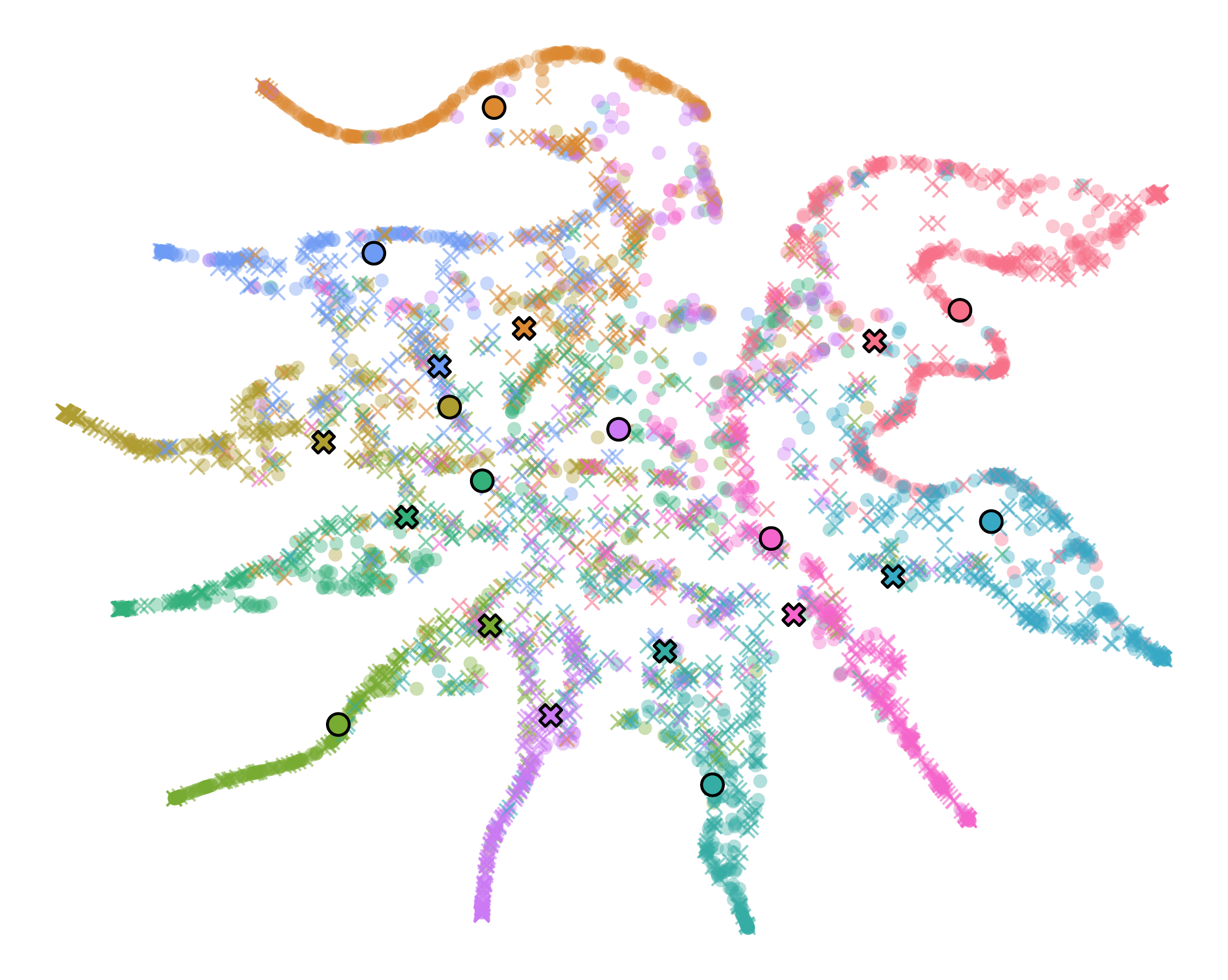}\\
\includegraphics[width=0.48\linewidth]{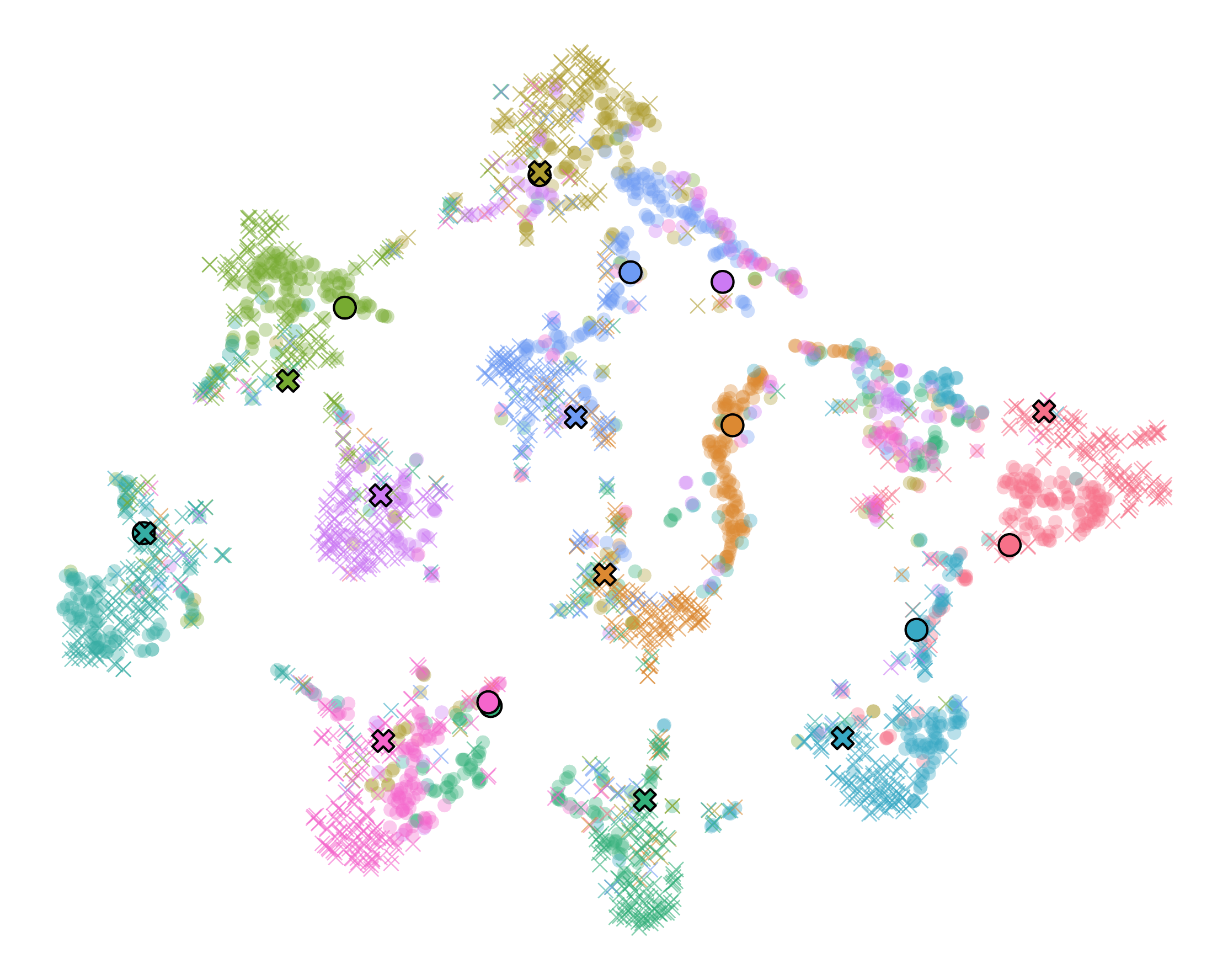}
\includegraphics[width=0.48\linewidth]{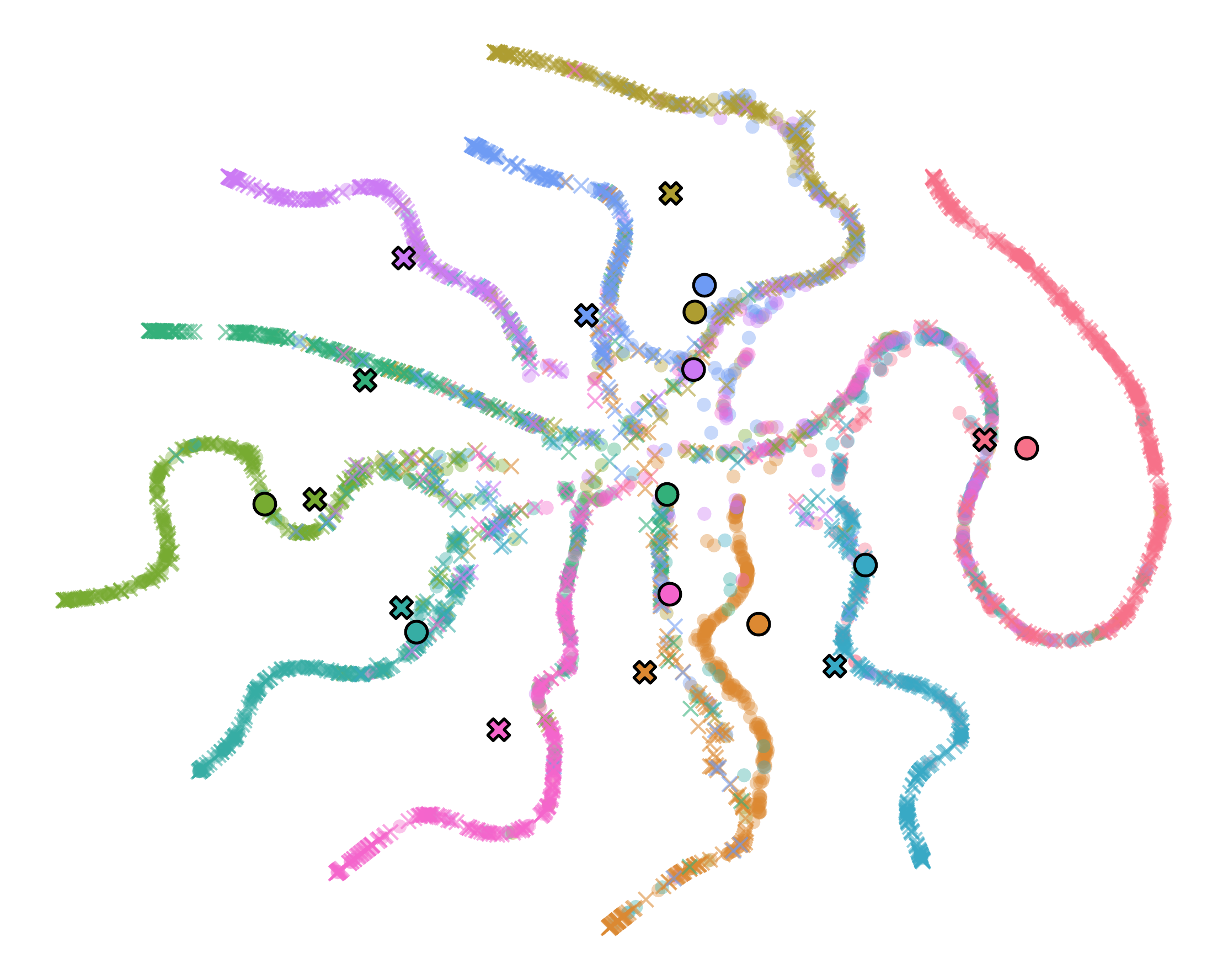}

\caption{t-SNE visualization of domain alignment between USPS (source, dots) and Syn (target, crosses). Columns show the feature space (left) and prediction space (right). Rows from top to bottom correspond to no adaptation, standard fine-tuning, and our JFPD method. 
JFPD produces tighter cross-domain clusters in the feature space and more consistent grouping in the prediction space, indicating better feature-prediction alignment and more reliable adaptation.
}
\label{fig:motivation}
\end{figure}

\section{Introduction}

\IEEEPARstart{D}{omain} adaptation (DA) \cite{wang2018deep, pan2010domain, li2024comprehensive} aims to transfer knowledge from a labeled source domain to an unlabeled or sparsely labeled target domain whose data distribution differs from that of the source. This capability is essential in many real-world applications where collecting labeled data for every deployment environment is impractical, such as medical imaging \cite{guan2021domain, perone2019unsupervised}, autonomous driving \cite{shan2019pixel}, and video understanding \cite{wang2024flow, wang2024high, dingjourney, ding2024quo, chen2024motion, zhu2024advancing, wang2024taylor, ding2025learnable}. In these scenarios, models trained on source data often suffer significant performance degradation when applied to target domains due to distribution shifts in appearance, feature representations, or semantic structures \cite{ding2024language, yu2023semi}. Addressing this challenge requires estimating and reducing the discrepancy between source and target domains.

Most existing DA approaches attempt to bridge this gap by aligning either feature representations or prediction distributions across domains \cite{farahani2021brief}. Feature-based methods \cite{chen2019joint, du2019ssf, li2024semantic} reduce domain shift by minimizing distances between intermediate representations, using metrics such as Euclidean distance, cosine similarity, or Maximum Mean Discrepancy (MMD) \cite{wang2021rethinking}. These methods aim to learn domain-invariant representations but often overlook whether aligned features correspond to consistent semantic predictions. Conversely, prediction-based methods align the model's outputs across domains by minimizing divergences such as Kullback-Leibler (KL) \cite{van2014renyi} or Jensen-Shannon (JS) divergence \cite{fuglede2004jensen}. While these approaches operate directly in the semantic space, they ignore the representation-level structures that give rise to those predictions. As a result, both strategies provide only partial views of domain discrepancy.

Beyond separating representation and prediction spaces, a more fundamental limitation persists in many existing methods: they implicitly assume that all samples provide equally reliable signals for alignment. Under domain shift, however, this assumption rarely holds. Target samples may exhibit uncertain predictions, ambiguous semantics, or poorly aligned representations due to noise, domain-specific variations, or model bias. Treating such samples uniformly during adaptation can introduce misleading correspondence signals and hinder effective transfer. Consequently, successful domain adaptation requires not only measuring domain discrepancy but also assessing the reliability of the signals used to estimate it.

Recent work has explored incorporating prediction information into domain alignment. 
For example, Conditional Adversarial Domain Adaptation (CDAN) \cite{cdanpe_nips} 
conditions adversarial feature alignment on classifier predictions and uses entropy-based 
weighting to reduce the influence of uncertain target samples. 
These approaches demonstrate that coupling feature representations with prediction information can improve domain alignment by conditioning feature adaptation on classifier outputs.
However, existing methods typically incorporate predictions as conditional signals for feature alignment rather than explicitly modeling joint discrepancies between feature and prediction spaces themselves \cite{ding2025graph}.
Moreover, the reliability of these alignment signals under domain shift 
is rarely quantified, even though both feature representations and predictions may become 
unreliable when the domain gap is large.

To address these open problems, we propose a trust-aware joint discrepancy formulation that explicitly measures both feature- and prediction-level divergences while estimating their reliability through complementary trust signals derived from prediction uncertainty and semantic feature alignment.
Unlike adversarial alignment methods, our formulation directly models domain discrepancy as a reliability-aware divergence between feature and prediction spaces.
We propose the \emph{Joint Feature-Prediction Discrepancy (JFPD)}, a unified formulation that captures domain shift by jointly considering representation divergence and prediction divergence while weighting their contributions according to sample-specific trust. Two complementary trust mechanisms are introduced to quantify the reliability of adaptation signals. First, \emph{uncertainty-aware trust} is derived from prediction entropy to down-weight unreliable or ambiguous predictions. Second, \emph{semantic-alignment trust} measures the consistency of feature representations with class prototypes, emphasizing samples that exhibit strong semantic alignment across domains. By integrating these signals, JFPD prioritizes confident and semantically consistent samples while suppressing noisy or uncertain correspondences, providing a reliability-aware estimate of domain discrepancy.

The motivation behind our work is illustrated in Fig.~\ref{fig:motivation}. 
Without adaptation, source and target samples form clearly separated clusters in both spaces, indicating a substantial domain gap. Standard fine-tuning partially reduces this gap but still produces scattered clusters and inconsistent predictions for many target samples. In contrast, JFPD 
yields significantly tighter cross-domain clusters in feature space and more coherent grouping in prediction space. This superiority arises because trust-aware discrepancy modeling focuses adaptation on samples that are both confident and semantically aligned, while reducing the influence of unreliable correspondences.

Importantly, the proposed discrepancy formulation serves not only as a diagnostic measure of domain shift but also as an optimization principle for adaptation. By incorporating JFPD into the training objective, the model is guided toward trustworthy regions of the target domain, leading to more stable and reliable cross-domain alignment. This unified perspective bridges discrepancy estimation and model optimization within a single reliability-aware framework.
Our contributions are: 

\renewcommand{\labelenumi}{\roman{enumi}.}
\begin{enumerate}[leftmargin=0.5cm]

\item A \textit{trust-aware perspective on domain discrepancy estimation} is introduced, which explicitly models the reliability of alignment signals under domain shift by considering the interaction between representation consistency and prediction confidence.

\item We propose the \textit{Joint Feature-Prediction Discrepancy (JFPD)}, a unified divergence that jointly captures feature-level and prediction-level discrepancies while incorporating complementary trust signals derived from prediction uncertainty and semantic alignment.

\item A \textit{trust-aware adaptation objective} is developed based on the proposed discrepancy, enabling the model to prioritize reliable cross-domain correspondences while suppressing noisy or ambiguous samples during adaptation.

\item Extensive experiments on domain adaptation benchmarks demonstrate that the proposed framework consistently achieves better adaptation performance and provides interpretable insights into cross-domain alignment dynamics.

\end{enumerate}









\section{Related Work}

Existing approaches fall into three categories: feature-level alignment, prediction-level adaptation, and methods that couple feature and prediction signals. Despite substantial progress, they typically rely on alignment signals whose reliability under domain shift is not modeled.

\textbf{Feature-level domain alignment.} A large body of work focuses on learning domain-invariant representations by aligning feature distributions across domains \cite{dan2023trust}. Early methods such as Deep Adaptation Networks (DAN) \cite{long2015learning} minimize MMD between source and target features. Subsequent extensions, including Joint Adaptation Networks (JAN) \cite{long2017deep}, align joint distributions of multiple network layers to capture more complex domain shifts. Adversarial approaches, such as Domain-Adversarial Neural Networks (DANN) and related variants \cite{ganin2016domain}, further encourage feature invariance by training a domain discriminator. Later works introduce more sophisticated adversarial strategies, such as CDAN \cite{cdanpe_nips} and gradually vanishing bridge methods \cite{cui2020gradually}, which incorporate classifier predictions into adversarial alignment.
Despite their effectiveness, feature-level alignment methods primarily focus on matching representation distributions and often assume that aligned features correspond to consistent semantic predictions. Under significant domain shifts, however, feature alignment alone may not guarantee a reliable semantic correspondence, particularly when predictions are uncertain or ambiguous.

\textbf{Prediction-level and confidence-aware adaptation.} Another line of work emphasizes adaptation in the prediction space by exploiting model confidence or uncertainty. Entropy minimization \cite{grandvalet2004semi,wang2019transferable, na2021fixbi, li2025seen} encourages confident predictions on target data and has been widely adopted in domain adaptation and test-time adaptation settings. For example, SHOT \cite{liang2020we} adapts a source-pretrained model by combining entropy minimization with self-supervised objectives, allowing adaptation without source data. TENT \cite{wang2021tent} further demonstrates the effectiveness of entropy-based adaptation during test time by updating normalization statistics to minimize prediction entropy. Other approaches incorporate sample-wise confidence measures to guide adaptation. For instance, CoWA-JMDS \cite{lee2022confidence} proposes a confidence score that combines prediction confidence with structural information from the feature distribution to weight samples during adaptation.
These approaches demonstrate the importance of prediction uncertainty in guiding adaptation. However, they primarily focus on the reliability of predictions and typically treat the underlying feature discrepancy as fixed or implicitly reliable.

\textbf{Joint feature-prediction modeling.} Several works attempt to combine feature and prediction signals to better capture domain shifts. Methods such as JAN \cite{long2017deep} align joint distributions across multiple layers, while CDAN \cite{cdanpe_nips} conditions adversarial alignment on classifier predictions, effectively coupling representation and prediction spaces. Prototype-based approaches also integrate semantic information by aligning features with class prototypes or cluster structures \cite{chen2019joint}. These methods demonstrate that jointly considering features and predictions can improve domain alignment.
Nevertheless, most existing methods incorporate prediction information primarily as conditional context for feature alignment rather than explicitly modeling how the reliability of features and predictions affects the estimation of domain discrepancy. In particular, when predictions are uncertain or representations are poorly aligned, the resulting alignment signals may become unreliable, potentially leading to misleading adaptation updates.

\textbf{Our trust-aware discrepancy modeling.} 
Instead of focusing solely on feature alignment, prediction confidence, or conditional coupling, we introduce a \emph{trust-aware discrepancy formulation} that explicitly models the reliability of both feature and prediction signals when estimating domain shift. The JFPD jointly measures representation divergence and prediction divergence while weighting them using complementary trust signals derived from prediction uncertainty and semantic alignment. This formulation allows adaptation to prioritize samples that provide reliable cross-domain correspondence while suppressing noisy or ambiguous ones.
By integrating reliability-aware discrepancy estimation with adaptation optimization, JFPD 
provides a unified perspective that connects representation alignment, prediction confidence, and sample-level trust within a single principled formulation.

\section{Method}

\textbf{Problem formulation.} Let $\gX$ denote the input space and $\gY = \{1,\dots,C\}$ the label space with $C$ classes.
We consider a standard domain adaptation setting consisting of a labeled source domain and an unlabeled or sparsely labeled target domain.
The source dataset is defined as
$\gD_s = \{(\vx_i^s, y_i^s)\}_{i=1}^{N_s}$,
where $\vx_i^s \in \gX$ and $y_i^s \in \gY$ are drawn from the joint distribution
$P_s(\vx,y)$.
The target dataset is $\gD_t = \{\vx_j^t\}_{j=1}^{N_t}$, where $\vx_j^t \sim P_t(\vx, y)$, with $\sim$ denoting that samples are drawn from the target joint distribution $P_t(\vx, y)$; the corresponding labels are unavailable or only partially observed.
In real-world applications such as cross-dataset visual recognition tasks, the source and target distributions typically differ, \ie,
$P_s(\vx,y) \neq P_t(\vx,y)$,
which leads to degraded performance when a model trained on $\gD_s$ is directly applied to $\gD_t$.
%
We consider a model of the form $h(\vx) = g(f(\vx))$, where $f:\gX\rightarrow\mathcal{F}$ is a feature extractor mapping inputs to a latent space $\mathcal{F}$, and $g:\mathcal{F}\rightarrow\Delta^{C-1}$ is a classifier that outputs a probability vector over $C$ classes, with $\Delta^{C-1} = \{\vp \in \mathbb{R}^C \mid p_c \ge 0, \sum_{c=1}^C p_c = 1\}$ denoting the $(C-1)$-dimensional probability simplex.
The overall goal of domain adaptation is to use labeled source data together with unlabeled (or sparsely labeled) target samples to learn a hypothesis $h$ that generalizes well on the target distribution.

A central challenge in domain adaptation is to quantify and reduce the discrepancy between the source and target domains.
Existing approaches typically measure domain shift either in the feature space, by aligning representation distributions,
or in the prediction space, by encouraging consistency between model outputs across domains.
While both strategies have shown effectiveness, they exhibit two limitations.
First, feature-level and prediction-level discrepancies are often modeled independently, even though the learned representation and classifier outputs are intrinsically coupled through the model $h(\vx)=g(f(\vx))$.
%
Second, many adaptation objectives treat target samples uniformly when computing discrepancy, despite the fact that different target instances exhibit varying degrees of reliability due to uncertainty, domain-specific noise, or semantic mismatch.
These observations motivate the need for a discrepancy measure that simultaneously accounts for representation shift and prediction inconsistency while selectively emphasizing reliable target samples.
To this end, we propose a \emph{trust-aware joint feature-prediction divergence (JFPD)} that integrates feature-level and prediction-level discrepancies and modulates their contributions using sample-specific trust estimates.
This method enables the model to focus adaptation on semantically consistent and confident regions of the target distribution while reducing the influence of unreliable samples.

Our method \textit{follows the standard unsupervised domain adaptation (UDA)} setting. 
During adaptation, pseudo-labels obtained from model predictions are used to associate target samples with source prototypes. 
%
Below, we formalize this idea through a unified discrepancy metric that jointly captures feature and prediction misalignment under trust-aware weighting.

\subsection{Trust-Aware Joint Feature-Prediction Divergence}
\label{sec:jfpd}

To characterize cross-domain discrepancy more comprehensively, we introduce a joint divergence measure that simultaneously accounts for mismatches in representation space and prediction space. The proposed Trust-Aware Joint Feature-Prediction Divergence integrates these two complementary views while allowing their contributions to be modulated by sample-specific trust estimates.

Consider a source sample $\vx^s$ and a target sample $\vx^t$. Let
$\vf_s = f(\vx^s)$, $\vf_t = f(\vx^t)$
denote their feature representations in the latent space $\mathcal{F}$, and let
$\vp_s = g(\vf_s)$, $\vp_t = g(\vf_t)$
be the corresponding softmax prediction vectors in the probability simplex $\Delta^{C-1}$.

\textbf{Feature-level divergence.} Feature-level discrepancy measures the structural mismatch between source and target representations in the latent space. Let $d(\cdot,\cdot)$ denote a distance metric in $\mathcal{F}$ (\eg, cosine or Euclidean distance). We define the normalized feature divergence as
\begin{equation}
d_{\text{feat}}(\vf_s,\vf_t)
=
\frac{d(\vf_s,\vf_t)}
{1+d(\vf_s,\vf_t)} .
\label{eq:feature_div}
\end{equation}

The transformation $\frac{x}{1+x}$ maps non-negative distances to the bounded interval $[0,1)$, preventing extreme values from dominating the discrepancy estimate and improving numerical stability during optimization.

\textbf{Prediction-level divergence.} In addition to representation mismatch, domain shift may also manifest as inconsistent model predictions. To quantify this effect, we compute divergence between the prediction distributions of the two samples. Let $D(\cdot,\cdot)$ denote a divergence between probability vectors (\eg, JS divergence). The normalized prediction divergence is defined as
\begin{equation}
d_{\text{pred}}(\vp_s,\vp_t)
=
\frac{D(\vp_s,\vp_t)}
{1+D(\vp_s,\vp_t)} .
\label{eq:pred_div}
\end{equation}

This term reflects semantic inconsistency between the model's outputs across domains, capturing shifts in decision boundaries or classifier calibration.

\textbf{Joint discrepancy formulation.}
Feature divergence and prediction divergence capture complementary aspects of domain mismatch. 
Because predictions are produced from features through the classifier, discrepancies in the representation space and prediction space are intrinsically coupled. Modeling them independently may therefore overlook important cross-space interactions during adaptation.

To address this problem, we integrate both components into a unified joint divergence measure. Let $\psi$ and $\phi$ denote trust weights associated with prediction reliability and feature alignment, respectively (defined in Section~\ref{sec:trust}). The proposed trust-aware JFPD is defined as
\begin{equation}
d_{\text{JFPD}}
=
\alpha  \psi  d_{\text{feat}}
+
(1-\alpha)  \phi  d_{\text{pred}},
\label{eq:jfpd}
\end{equation}
where $\alpha \in [0,1]$ balances the relative importance of feature-level and prediction-level discrepancies.

This formulation provides two desirable properties. First, it captures domain mismatch jointly in representation and decision spaces, yielding a more expressive measure of cross-domain discrepancy. Second, the trust weights $\psi$ and $\phi$ act as soft reliability indicators, reducing the influence of uncertain or poorly aligned samples during discrepancy estimation.

Consequently, JFPD provides a principled mechanism for measuring domain shift while selectively emphasizing reliable regions of the target distribution. In the following section, we describe how the trust weights are constructed to capture predictive confidence and semantic alignment.

\subsection{Trust Modeling}
\label{sec:trust}

Uniformly weighting samples in domain discrepancy estimation can introduce noisy gradients and unstable adaptation. 
We address this using \emph{trust-aware weighting} to modulate each sample pair’s contribution.
The trust modeling strategy is designed to capture two complementary aspects of reliability: predictive confidence and semantic alignment. These trust signals are then used to regulate the feature-level and prediction-level discrepancy terms introduced in Section~\ref{sec:jfpd}.

\textbf{Uncertainty-aware trust.} The first trust component measures the reliability of model predictions. Intuitively, if the classifier produces highly uncertain outputs for either the source or target sample, the corresponding pair should contribute less to the discrepancy estimation.
We quantify predictive uncertainty using entropy. For a prediction vector $\vp$, the entropy is defined as
\begin{equation}
H(\vp) = -\sum_{c=1}^{C} p_c \log p_c .
\end{equation}

Given the predictions $\vp_s$ and $\vp_t$ of a source-target pair, we define the uncertainty-aware trust weight as
\begin{equation}
\psi
=
\frac{1}{1 + H(\vp_s) + H(\vp_t)} .
\label{eq:trust_entropy}
\end{equation}
This formulation assigns higher trust to pairs whose predictions are confident in both domains, while downweighting ambiguous or low-confidence predictions. 

\textbf{Semantic-alignment trust.} While predictive confidence reflects classifier reliability, it does not directly measure whether the source and target samples are semantically aligned in the representation space. To capture this complementary signal, we introduce a feature-based trust weight that reflects semantic proximity between the two samples.
Based on the feature divergence in Eq.~(\ref{eq:feature_div}), the semantic-alignment trust is
\begin{equation}
\phi
=
\frac{1}{1 + d_{\text{feat}}(\vf_s,\vf_t)} .
\label{eq:trust_feat}
\end{equation}

This mechanism assigns higher trust to source-target pairs that are already close in the representation space, encouraging the model to focus adaptation on semantically consistent regions while suppressing noisy or poorly aligned pairs.

\textbf{Cross-guided trust design.} A key novelty of our framework is the cross-guided trust mechanism, where reliability signals from one space regulate adaptation in the other. Specifically, \textit{predictive confidence} ($\psi$) modulates the feature-level divergence, and \textit{semantic alignment} ($\phi$) modulates the prediction-level divergence.
This cross-guided strategy provides two advantages. First, prediction confidence helps prevent unreliable classifier outputs from driving representation alignment. Second, semantic proximity in the feature space helps stabilize prediction-level discrepancy by emphasizing samples that are likely to share the same underlying semantics.

Through this complementary design, the trust weights act as soft reliability indicators that selectively emphasize trustworthy source-target pairs while reducing the influence of uncertain or mismatched samples. When incorporated into the joint divergence formulation of Eq.~(\ref{eq:jfpd}), this mechanism enables more stable and reliable domain alignment.

\subsection{Trust-Aware Adaptation Objective}
\label{sec:objective}

We now translate this discrepancy into a training objective that enables effective domain adaptation.

\textbf{Source prototypes.} Directly computing pairwise divergence between all source and target samples can be computationally expensive and may introduce noise due to irrelevant sample correspondences. 
Instead of aligning every source-target pair, which is computationally expensive and noisy, we approximate the source distribution using class prototypes.
For each class $c \in \{1,\dots,C\}$, we compute a source feature prototype
\begin{equation}
\vz_c^s = \frac{1}{|\gD_s^c|} 
\sum_{\vx_i^s \in \gD_s^c} f(\vx_i^s),
\label{eq:zp}
\end{equation}
where $\gD_s^c$ denotes the subset of source samples belonging to class $c$.
Similarly, we define a prediction prototype
\begin{equation}
\vp_c^s =
\frac{1}{|\gD_s^c|}
\sum_{\vx_i^s \in \gD_s^c} g(f(\vx_i^s)).
\label{eq:pp}
\end{equation}

These prototypes provide compact representations of class semantics in feature and prediction spaces, enabling target samples to align with their corresponding source categories.

\textbf{Trust-aware discrepancy minimization.} Let $\vx^t$ be a target sample with feature representation
$\vf_t = f(\vx^t)$
and prediction
$\vp_t = g(\vf_t)$.
A pseudo-label is then assigned according to
\begin{equation}
\hat{y}_t = \arg\max_c p_t^{(c)},
\label{eq:pseudo-label}
\end{equation}
and the target sample is aligned with its corresponding source prototypes $(\vz_{\hat{y}^t}^s, \vp_{\hat{y}^t}^s)$.
Following the joint divergence formulation, we define the per-sample adaptation loss as
\begin{equation}
\mathcal{L}_{\text{JFPD}}
=
\alpha  \psi  d_{\text{feat}}(\vf_t,\vz_{\hat{y}^t}^s)
+
(1-\alpha)  \phi  d_{\text{pred}}(\vp_t,\vp_{\hat{y}^t}^s),
\label{eq:jfpd_loss}
\end{equation}
where $\psi$ and $\phi$ are the trust weights defined in Section~\ref{sec:trust}. This objective encourages the model to reduce both representation mismatch and prediction inconsistency between target samples and their corresponding source class prototypes.

\textbf{Cross-guided alignment.} 
The loss in Eq.~(\ref{eq:jfpd_loss}) implements the cross-guided trust mechanism described earlier, where feature divergence is modulated by prediction confidence,
\begin{equation}
\mathcal{L}_{\text{PGFD}}
=
\psi  d_{\text{feat}}(\vf_t,\vz_{\hat{y}^t}^s),
\end{equation}
which we refer to as \emph{prediction-guided feature discrepancy (PGFD)}.
This term encourages representation alignment primarily when the classifier produces reliable predictions.

Conversely, the prediction-level divergence is weighted by feature alignment,
\begin{equation}
\mathcal{L}_{\text{FGPD}}
=
\phi  d_{\text{pred}}(\vp_t,\vp_{\hat{y}^t}^s),
\end{equation}
which we denote as \emph{feature-guided prediction divergence (FGPD)}.
This term emphasizes prediction consistency for samples that are semantically close in the feature space.

By integrating these complementary components, the overall objective selectively focuses adaptation on reliable and semantically consistent regions of the target distribution.

\textbf{Training objective.} The adaptation objective is obtained by averaging the per-sample losses over the target dataset:
\begin{equation}
\mathcal{L}
=
\frac{1}{N_t}
\sum_{j=1}^{N_t}
\mathcal{L}_{\text{JFPD}}(\vx_j^t).
\label{eq:final-loss}
\end{equation}
Below, we provide a theoretical interpretation of JFPD. 

\subsection{Theoretical Justification}
\label{sec:theory}

Our analysis shows that JFPD acts as a \emph{trust-weighted surrogate} for the domain divergence term in classical domain adaptation theory. 
Minimizing the expected JFPD therefore reduces the upper bound on the target-domain risk while emphasizing reliable regions of the target distribution.

\textbf{Generalization bound.}
Let $\epsilon_S(h)$ and $\epsilon_T(h)$ denote the expected source and target risks of a hypothesis $h \in \mathcal{H}$. 
A fundamental result in domain adaptation theory \cite{ben2010theory} states that
\begin{equation}
\epsilon_T(h)
\le
\epsilon_S(h)
+
\frac{1}{2} d_{\mathcal{H}\Delta\mathcal{H}}(P_s,P_t)
+
\lambda ,
\label{eq:ben_david_bound}
\end{equation}
where $d_{\mathcal{H}\Delta\mathcal{H}}(P_s,P_t)$ denotes the $\mathcal{H}\Delta\mathcal{H}$ divergence between the source and target distributions, and 
$\lambda = \min_{h \in \mathcal{H}} (\epsilon_S(h)+\epsilon_T(h))$ represents the joint error of the optimal hypothesis.
This bound shows that the target error can be controlled by minimizing both the source risk and the distribution discrepancy.
In practical domain adaptation settings, the source model is pretrained; thus, adaptation primarily focuses on reducing the divergence between source and target domains.

\textbf{JFPD as a surrogate divergence.}
Direct computation of $d_{\mathcal{H}\Delta\mathcal{H}}$ is generally intractable because it requires evaluating classifier disagreement over the entire hypothesis space $\mathcal{H}$. 
Practical methods therefore rely on tractable empirical surrogates that approximate cross-domain mismatch.

In our framework, the source distribution is summarized using class-level prototypes in both feature and prediction spaces (Eqs.~(\ref{eq:zp}) and~(\ref{eq:pp})). 
For each target sample $\vx^t$, the corresponding source prototypes $(\vz_{\hat{y}^t}^s,\vp_{\hat{y}^t}^s)$ provide reference representations that characterize the class structure of the source domain.
Feature-level divergence measures structural mismatch in the representation space, while prediction-level divergence captures semantic disagreement between probability distributions. 
By combining these complementary signals, we define the per-sample joint discrepancy
\begin{equation}
d_{\text{JFPD}}(\vx^t)
\!=\!
\alpha\psi(\vx^t) d_{\text{feat}}(\vf_t,\vz_{\hat{y}^t}^s)
\!+\!
(1\!-\!\alpha)\phi(\vx^t) d_{\text{pred}}(\vp_t,\vp_{\hat{y}^t}^s), \nonumber
\label{eq:jfpd_prototype}
\end{equation}
where $\alpha\in[0,1]$ balances feature and prediction discrepancies.
The trust weights $\psi(\vx^t)$ and $\phi(\vx^t)$ downweight unreliable samples and emphasize semantically consistent target regions. 

\textbf{Trust-weighted divergence surrogate.}  
We show that the expected JFPD acts as a trust-weighted surrogate for the $\mathcal{H}\Delta\mathcal{H}$ divergence.  
This establishes a connection between minimizing JFPD and reducing the domain discrepancy in Eq.~(\ref{eq:ben_david_bound}).

\begin{proposition}[Trust-weighted surrogate bound]
\label{prop:trust_bound_full}
Let $d_{\text{feat}}(\cdot,\cdot)$ and $d_{\text{pred}}(\cdot,\cdot)$ be bounded divergences satisfying 
$0 \!\le\! d_{\text{feat}}, d_{\text{pred}} \!\le\! 1$.  
Let $\psi(\vx^t), \phi(\vx^t) \!\in\! (0,1]$ denote trust weights.  
Assume there exist constants $\gamma_1, \gamma_2 \!>\! 0$ such that
$\gamma_1 \!\le\! \psi(\vx^t) \!\le\! 1$ and $\gamma_2 \!\le\! \phi(\vx^t) \!\le\! 1$ for all $\vx^t$.  
Then there exists a residual constant $\xi \!\ge\! 0$ such that
\begin{align}
d_{\mathcal{H}\Delta\mathcal{H}}(P_s,P_t)
& \le 
\frac{1}{\gamma_1} \mathbb{E}_{\vx^t \sim P_t}
\big[\psi(\vx^t) d_{\text{feat}}(\vf_t, \vz_{\hat{y}^t}^s)\big] \nonumber \\
&\quad + \frac{1}{\gamma_2} \mathbb{E}_{\vx^t \sim P_t}
\big[\phi(\vx^t) d_{\text{pred}}(\vp_t, \vp_{\hat{y}^t}^s)\big]
+ \xi. \nonumber
\end{align}
\end{proposition}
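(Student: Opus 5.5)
The plan rests on two facts. The residual $\xi$ is existentially quantified. Every quantity on the right-hand side is nonnegative. So the statement reduces to two elementary steps.

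\emph{Step 1: remove the trust weights.} Since $\psi(\vx^t)\ge\gamma_1>0$ and $d_{\text{feat}}\ge 0$, we have $d_{\text{feat}}(\vf_t,\vz^s_{\hat y^t})\le \frac{1}{\gamma_1}\psi(\vx^t)\,d_{\text{feat}}(\vf_t,\vz^s_{\hat y^t})$ pointwise. In the same way, $d_{\text{pred}}\le \frac{1}{\gamma_2}\phi\, d_{\text{pred}}$. Taking expectations under $P_t$ is legitimate because all integrands are bounded in $[0,1]$. This gives
\begin{equation*}
\mathbb{E}[d_{\text{feat}}]+\mathbb{E}[d_{\text{pred}}]\le \frac{1}{\gamma_1}\mathbb{E}[\psi d_{\text{feat}}]+\frac{1}{\gamma_2}\mathbb{E}[\phi d_{\text{pred}}].
\end{equation*}

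\emph{Step 2: relate the unweighted prototype discrepancy to $d_{\mathcal{H}\Delta\mathcal{H}}$.} Define the residual
\begin{equation*}
\xi := \max\Bigl\{0,\; d_{\mathcal{H}\Delta\mathcal{H}}(P_s,P_t) - \mathbb{E}_{P_t}[d_{\text{feat}}] - \mathbb{E}_{P_t}[d_{\text{pred}}]\Bigr\}.
\end{equation*}
Since $d_{\mathcal{H}\Delta\mathcal{H}}\le 2$ is finite, $\xi$ is well defined, nonnegative, and at most $2$. By construction,
\begin{equation*}
d_{\mathcal{H}\Delta\mathcal{H}}\le \mathbb{E}[d_{\text{feat}}]+\mathbb{E}[d_{\text{pred}}]+\xi.
\end{equation*}
Chaining this with Step 1 yields the claim.

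This argument is formally complete. The main obstacle is that it makes $\xi$ essentially tautological. A more meaningful proof would bound $\xi$ by interpretable quantities. For that I would try the following route.

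First, assume the hypotheses in $\mathcal{H}$ factor as $g\circ f$ with $g$ Lipschitz in features. Then the disagreement of two hypotheses on $\vx^t$ versus on a source point of the same class can be controlled by the feature distance to the class prototype. Converting $d/(1+d)$ back to $d$ costs only a constant, because $d$ is bounded on the relevant set.

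Second, I would control the remaining prediction mismatch by $d_{\text{pred}}$ through Pinsker-type inequalities linking JS divergence to total variation.

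What remains in $\xi$ is then three pieces:
\begin{itemize}
\item the within-class spread of source features around the prototypes;
\item the pseudo-label error $P_t(\hat y^t\neq y^t)$, which misassigns prototypes;
\item the label-shift mismatch between the class marginals.
\end{itemize}
I would state this refined version as a remark, while the proposition itself only needs Steps 1 and 2.
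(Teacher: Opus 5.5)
Your proof is correct and follows the paper's argument. The paper likewise posits a surrogate bound $d_{\mathcal{H}\Delta\mathcal{H}}(P_s,P_t)\le \mathbb{E}[d_{\text{feat}}]+\mathbb{E}[d_{\text{pred}}]+\xi$, then applies the pointwise inequalities $d_{\text{feat}}\le \frac{1}{\gamma_1}\psi\, d_{\text{feat}}$ and $d_{\text{pred}}\le \frac{1}{\gamma_2}\phi\, d_{\text{pred}}$ and takes expectations. The only difference is that you define $\xi$ explicitly rather than invoking unspecified ``mild regularity conditions,'' which makes clear that $\xi$ exists automatically (indeed $\xi = d_{\mathcal{H}\Delta\mathcal{H}}(P_s,P_t)$ alone would suffice) and that the proposition has real content only once $\xi$ is controlled, as in your suggested refinement.
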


\begin{proof}
Feature and prediction divergences capture structural and semantic differences between source and target representations.  
Under mild regularity conditions, they provide empirical surrogates for hypothesis disagreement.  
Hence, there exists a residual constant $\xi \ge 0$ such that
\[
d_{\mathcal{H}\Delta\mathcal{H}}(P_s,P_t)
\le
\mathbb{E}[d_{\text{feat}}(\vf_t, \vz_{\hat{y}^t}^s)]
+
\mathbb{E}[d_{\text{pred}}(\vp_t, \vp_{\hat{y}^t}^s)]
+
\xi.
\]

The residual $\xi$ accounts for approximation errors due to surrogate divergence estimation, imperfect prototypes, and outlier samples.
Since $\psi(\vx^t) \!\!\ge\!\! \gamma_1$, we have
$d_{\text{feat}}(\vf_t, \vz_{\hat{y}^t}^s)
\!\!\le\!\! \frac{1}{\gamma_1} \psi(\vx^t) d_{\text{feat}}(\vf_t, \vz_{\hat{y}^t}^s)$,
and taking expectations yields
$\mathbb{E}[d_{\text{feat}}] \!\le\! \frac{1}{\gamma_1} \mathbb{E}[\psi(\vx^t) d_{\text{feat}}]$.
A similar argument applies to $d_{\text{pred}}$, giving
$\mathbb{E}[d_{\text{pred}}] \!\le\! \frac{1}{\gamma_2} \mathbb{E}[\phi(\vx^t) d_{\text{pred}}]$.
Substituting these into the surrogate bound completes the proof.
\end{proof}

We define $\alpha = \frac{\gamma_2}{\gamma_1 + \gamma_2}$ (so that $1-\alpha = \frac{\gamma_1}{\gamma_1 + \gamma_2}$). 
With this choice, the divergence can be bounded as
$
d_{\mathcal{H}\Delta\mathcal{H}}(P_s,P_t)
\le
M \mathbb{E}[d_{\text{JFPD}}] + \xi,
$
where $M = \max\!\left\{\tfrac{1}{\gamma_1}, \tfrac{1}{\gamma_2}\right\}$ and
$\mathbb{E}[d_{\text{JFPD}}]=\alpha\mathbb{E}[\psi d_{\text{feat}}]
+
(1-\alpha) \mathbb{E}[\phi d_{\text{pred}}]
$.

\textbf{Effect of trust weighting.}
Trust weighting has an important stabilizing effect on divergence estimation. 

\begin{lemma}[Suppression of unreliable samples]
\label{lemma:trust_suppression}

Let $d_{\text{feat}}, d_{\text{pred}} \in [0,1]$ and $\psi,\phi \in (0,1]$.
Then
\[
\mathbb{E}[d_{\text{JFPD}}]=\mathbb{E}[\alpha \psi d_{\text{feat}} + (1-\alpha)\phi d_{\text{pred}}]
\le
\mathbb{E}[\alpha d_{\text{feat}} + (1-\alpha)d_{\text{pred}}].
\]

\end{lemma}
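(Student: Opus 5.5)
The inequality follows from a pointwise comparison of the integrands, followed by monotonicity and linearity of expectation. Fix a target sample $\vx^t$ and write $d_{\text{feat}} = d_{\text{feat}}(\vf_t,\vz_{\hat{y}^t}^s)$ and $d_{\text{pred}} = d_{\text{pred}}(\vp_t,\vp_{\hat{y}^t}^s)$. By the normalization in Eqs.~(\ref{eq:feature_div}) and~(\ref{eq:pred_div}), both quantities lie in $[0,1)$, so they are nonnegative. By Eqs.~(\ref{eq:trust_entropy}) and~(\ref{eq:trust_feat}), the trust weights satisfy $\psi,\phi\in(0,1]$, since entropy and $d_{\text{feat}}$ are nonnegative. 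The lemma takes these ranges as hypotheses in any case.

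\textbf{Step 1 (feature term).} Since $d_{\text{feat}}\ge 0$ and $1-\psi\ge 0$, we have $d_{\text{feat}} - \psi d_{\text{feat}} = (1-\psi)d_{\text{feat}} \ge 0$, hence $\psi d_{\text{feat}} \le d_{\text{feat}}$.

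\textbf{Step 2 (prediction term).} By the same argument, $(1-\phi)d_{\text{pred}}\ge 0$, hence $\phi d_{\text{pred}}\le d_{\text{pred}}$.

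\textbf{Step 3 (combine with the weights).} Multiply the inequality of Step 1 by $\alpha\ge 0$ and that of Step 2 by $1-\alpha\ge 0$, which preserves both inequalities because $\alpha\in[0,1]$. Adding them gives the pointwise bound
\[
\alpha\psi d_{\text{feat}} + (1-\alpha)\phi d_{\text{pred}} \le \alpha d_{\text{feat}} + (1-\alpha) d_{\text{pred}}
\]
for every $\vx^t$.

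\textbf{Step 4 (integrate).} Both sides are bounded in $[0,1]$, so they are integrable under $P_t$ and their expectations exist. Taking $\mathbb{E}_{\vx^t\sim P_t}$ and using monotonicity of expectation yields the claimed inequality.

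There is no substantive obstacle in this argument. The only points requiring care are that the weights $\alpha$ and $1-\alpha$ are nonnegative, so the inequalities keep their direction when combined, and that the integrands are measurable and bounded, so the expectations are well defined.

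Strict inequality holds precisely when the trust weights fall below one on a set of positive measure where the corresponding divergence is positive. This is the sense in which unreliable samples are ``suppressed''.
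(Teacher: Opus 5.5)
Your proof is correct and follows the paper's argument: you use the pointwise bounds $\psi d_{\text{feat}}\le d_{\text{feat}}$ and $\phi d_{\text{pred}}\le d_{\text{pred}}$, combine them with the nonnegative weights $\alpha$ and $1-\alpha$, and then apply monotonicity and linearity of expectation. Your closing remark on strict inequality, which is not part of the lemma, needs the corresponding coefficient to be positive as well; for instance, when $\alpha=0$, a set of positive measure where $\psi<1$ and $d_{\text{feat}}>0$ does not make the inequality strict.
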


\begin{proof}
Since $0<\psi,\phi\le1$, we have $\psi d_{\text{feat}}\le d_{\text{feat}}$ and $\phi d_{\text{pred}}\le d_{\text{pred}}$. 
Taking expectations and using linearity completes the proof.
\end{proof}

This result formalizes the intuition that trust weighting suppresses unreliable or uncertain samples, preventing them from dominating the discrepancy estimate and improving the stability of domain alignment.

\textbf{Target risk reduction via JFPD minimization.}
Combining Proposition~\ref{prop:trust_bound_full} with the generalization bound in Eq.~(\ref{eq:ben_david_bound}) yields the following result.

\begin{corollary}[Target risk bound under trust-aware adaptation]
\label{cor:target_risk}

Assume that the ideal joint error $\lambda$ is small. Then, minimizing the expected JFPD provides an upper bound on the target-domain risk:
$\epsilon_T(h)
\le
\epsilon_S(h)
+
\frac{M}{2}\mathbb{E}[d_{\text{JFPD}}]
+
\frac{1}{2}\xi
+
\lambda$.

\end{corollary}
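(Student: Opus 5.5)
The argument substitutes the surrogate bound of Proposition~\ref{prop:trust_bound_full}, in its combined form with constant $M$, into the classical bound of Eq.~(\ref{eq:ben_david_bound}). I begin from
\[
\epsilon_T(h)\le \epsilon_S(h)+\tfrac12 d_{\mathcal{H}\Delta\mathcal{H}}(P_s,P_t)+\lambda ,
\]
which holds for every $h\in\mathcal{H}$. Since the right-hand side is monotone in the divergence term, any upper bound on $d_{\mathcal{H}\Delta\mathcal{H}}$ can be substituted directly. The assumption that $\lambda$ is small is not needed for the inequality itself. It only explains why the bound is informative, so I carry $\lambda$ along unchanged.

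The key step is to justify $d_{\mathcal{H}\Delta\mathcal{H}}(P_s,P_t)\le M\,\mathbb{E}[d_{\text{JFPD}}]+\xi$. Proposition~\ref{prop:trust_bound_full} gives
\[
d_{\mathcal{H}\Delta\mathcal{H}}\le \tfrac{1}{\gamma_1}A+\tfrac{1}{\gamma_2}B+\xi,
\qquad A=\mathbb{E}[\psi d_{\text{feat}}],\quad B=\mathbb{E}[\phi d_{\text{pred}}].
\]
I would like to dominate $\tfrac{1}{\gamma_1}A+\tfrac{1}{\gamma_2}B$ by a multiple of $\alpha A+(1-\alpha)B$.

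This is the main obstacle. With the paper's choice $\alpha=\gamma_2/(\gamma_1+\gamma_2)$, a direct comparison gives
\[
\tfrac{1}{\gamma_1}A+\tfrac{1}{\gamma_2}B=\frac{\gamma_1+\gamma_2}{\gamma_1\gamma_2}\bigl(\alpha A+(1-\alpha)B\bigr).
\]
So the natural constant is $(\gamma_1+\gamma_2)/(\gamma_1\gamma_2)=\tfrac1{\gamma_1}+\tfrac1{\gamma_2}$. This is at least $M$, and the two are not equal in general.

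To obtain a clean inequality I would take one of two routes. The first replaces $M$ by $M'=\tfrac1{\gamma_1}+\tfrac1{\gamma_2}$, which is valid for every $\alpha$ in this form. The second, if $M$ must be kept, absorbs the excess $\bigl(\tfrac1{\gamma_1}+\tfrac1{\gamma_2}-M\bigr)\mathbb{E}[d_{\text{JFPD}}]$ into the residual $\xi$. This is legitimate because Proposition~\ref{prop:trust_bound_full} only asserts the existence of some $\xi\ge 0$, and the excess is at most $\min\{\tfrac1{\gamma_1},\tfrac1{\gamma_2}\}$ since all quantities lie in $[0,1]$. Either way, the statement holds with a suitably enlarged nonnegative residual.

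Finally, I substitute the divergence bound into the Ben-David inequality. Multiplying through by $\tfrac12$ gives
\[
\epsilon_T(h)\le\epsilon_S(h)+\tfrac{M}{2}\mathbb{E}[d_{\text{JFPD}}]+\tfrac12\xi+\lambda .
\]
Because $\epsilon_S(h)$ is controlled by the pretrained source model and $\lambda$ is assumed small, decreasing $\mathbb{E}[d_{\text{JFPD}}]$ by minimizing Eq.~(\ref{eq:final-loss}) tightens the bound. That is the interpretive content of the corollary.
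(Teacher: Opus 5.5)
Your argument is the paper's argument. Its proof of Corollary~\ref{cor:target_risk} quotes the combined bound $d_{\mathcal{H}\Delta\mathcal{H}}(P_s,P_t)\le M\,\mathbb{E}[d_{\text{JFPD}}]+\xi$, stated after Proposition~\ref{prop:trust_bound_full}, and substitutes it into Eq.~(\ref{eq:ben_david_bound}), exactly as in your last step; like you, it never uses that $\lambda$ is small. The paper asserts that combined bound without derivation, and your computation is right that it does not follow from the proposition with the same $\xi$.

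Write $A=\mathbb{E}[\psi d_{\text{feat}}]$, $B=\mathbb{E}[\phi d_{\text{pred}}]$ and $\alpha=\gamma_2/(\gamma_1+\gamma_2)$. The inequalities $M\alpha<1/\gamma_1$ and $M(1-\alpha)<1/\gamma_2$ both reduce to $\max\{\gamma_1,\gamma_2\}<\gamma_1+\gamma_2$. Hence $M\,\mathbb{E}[d_{\text{JFPD}}]$ is strictly smaller than $\tfrac{1}{\gamma_1}A+\tfrac{1}{\gamma_2}B$ whenever $A+B>0$. The paper's constant is too small by the factor $1+\min\{\gamma_1,\gamma_2\}/\max\{\gamma_1,\gamma_2\}\in(1,2]$; $M$ is the right constant for the unweighted sum $A+B$, not for $\alpha A+(1-\alpha)B$.

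No other $\alpha$ rescues $M$. Requiring $\tfrac1{\gamma_1}A+\tfrac1{\gamma_2}B\le C\bigl(\alpha A+(1-\alpha)B\bigr)$ for all $A,B\ge0$ forces $C\ge\max\{1/(\alpha\gamma_1),\,1/((1-\alpha)\gamma_2)\}$. This is minimized exactly at the paper's $\alpha$, with value $\tfrac1{\gamma_1}+\tfrac1{\gamma_2}$. So your proof is more careful than the paper's, and your first repair is the one to present. Since $\tfrac1{\gamma_1}+\tfrac1{\gamma_2}\le 2M$, a clean correct version is the corollary with $\tfrac{M}{2}\mathbb{E}[d_{\text{JFPD}}]$ replaced by $M\,\mathbb{E}[d_{\text{JFPD}}]$, keeping the proposition's $\xi$.

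Two things in your write-up should be fixed.

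First, $M'=\tfrac1{\gamma_1}+\tfrac1{\gamma_2}$ is \emph{not} valid for every $\alpha$. By the computation above it works only at $\alpha=\gamma_2/(\gamma_1+\gamma_2)$, where it holds with equality. Elsewhere the required constant is strictly larger and blows up as $\alpha\to0$ or $\alpha\to1$. This does not affect the corollary, because $M$ and the combined bound are only defined for that choice of $\alpha$, but the claim should be dropped.

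Second, the absorption route is formally allowed by the existential quantifier on $\xi$, but it silently replaces the proposition's residual. If you absorb the constant bound, the new residual is $\xi+\min\{1/\gamma_1,1/\gamma_2\}\ge1$ (recall $\gamma_i\le1$), which is of the same order as the trivial bound $d_{\mathcal{H}\Delta\mathcal{H}}\le2$. If you absorb the $h$-dependent excess instead, the ``residual'' is no longer a constant. Either way the displayed inequality is proved only by emptying it. Your final display should therefore carry $\tfrac{M'}{2}\mathbb{E}[d_{\text{JFPD}}]$, or equivalently $M\,\mathbb{E}[d_{\text{JFPD}}]$, rather than $\tfrac{M}{2}\mathbb{E}[d_{\text{JFPD}}]$ with a residual that only looks unchanged.
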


\begin{proof}
From the domain adaptation generalization bound (Eq.~(\ref{eq:ben_david_bound})), we have
$\epsilon_T(h)
\le
\epsilon_S(h)
+
\frac{1}{2} d_{\mathcal{H}\Delta\mathcal{H}}(P_s,P_t)
+
\lambda$.
By Proposition~\ref{prop:trust_bound_full},
$d_{\mathcal{H}\Delta\mathcal{H}}(P_s,P_t)
\le
M\mathbb{E}[d_{\text{JFPD}}]
+
\xi$.
Substituting this bound completes the proof.
\end{proof}

This result provides theoretical justification for the proposed adaptation objective. 
Minimizing JFPD simultaneously reduces representation mismatch in feature space and semantic disagreement in prediction space. 
The trust weights further guide optimization toward reliable and semantically consistent target samples, while suppressing uncertain or mismatched regions.
Consequently, the proposed trust-aware JFPD (Eq. (\ref{eq:jfpd})) offers a principled mechanism for controlling the target-domain generalization error while improving robustness to noisy or unreliable target instances.

\subsection{Algorithm and Implementation}
\label{sec:implementation}

This section describes the optimization procedure of the proposed trust-aware adaptation framework and provides practical implementation details. 
We 
adopt cosine distance for feature divergence, and use JS divergence for prediction divergence.
%
The overall training procedure consists of two stages: \emph{source pretraining} and \emph{trust-aware target adaptation}.

\textbf{Source pretraining.}
We first train the backbone model $h(\vx)=g(f(\vx))$ on the labeled source dataset $\gD_s$. 
The feature extractor $f(\cdot)$ and classifier $g(\cdot)$ are optimized using the standard cross-entropy loss
\begin{equation}
\mathcal{L}_{\text{CE}}
=
-\mathbb{E}_{(\vx_i^s,y_i^s)\sim\gD_s}
\log p(y_i^s|\vx_i^s).
\end{equation}

After pretraining, the learned model provides stable features and prediction distributions for the source domain. 
These outputs are used to construct class-level prototypes in both feature and prediction spaces as defined in Eqs.~(\ref{eq:zp}) and~(\ref{eq:pp}). 
The prototypes summarize the semantic structure of the source domain and serve as alignment anchors for target samples.

\textbf{Trust-aware target adaptation.} During adaptation, the pretrained model is fine-tuned using unlabeled target data by minimizing the proposed JFPD. 
For each target sample, we compute its feature representation and prediction vector. 
The corresponding source prototypes $(\vz^s_{\hat{y}_t},\vp^s_{\hat{y}_t})$ are retrieved (via Eqs. (\ref{eq:zp}), (\ref{eq:pp}) and (\ref{eq:pseudo-label})) to evaluate cross-domain discrepancy. 
Feature divergence and prediction divergence are computed using Eqs.~(\ref{eq:feature_div}) and~(\ref{eq:pred_div}). 
The trust weights $\psi$ and $\phi$ are estimated via Eqs.~(\ref{eq:trust_entropy}) and~(\ref{eq:trust_feat}), which quantify prediction confidence and semantic alignment, respectively.
The resulting per-sample adaptation loss is Eq. (\ref{eq:jfpd_loss}).
The final training objective (Eq. (\ref{eq:final-loss}) minimizes the average loss over target samples.

\textbf{Dynamic prototype estimation.}
To improve robustness and avoid overfitting to fixed summary statistics, we use a dynamic prototype estimation strategy. 
Instead of computing prototypes once from the entire source dataset, we construct \emph{mini-batch prototypes} by randomly sampling a subset of source samples from each class during each training iteration. 
These stochastic prototypes encourage diverse cross-domain alignment and help the model adapt to evolving target representations.

\textbf{Optimization.}
The model parameters (feature extractor and classifier) 
are updated using stochastic gradient descent with mini-batches of target samples. 
Trust weights are recomputed at every iteration based on the current predictions and features. 
This adaptive mechanism gradually shifts the optimization focus toward reliable and semantically consistent target samples, while suppressing uncertain or mismatched instances.

\begin{table*}[tbp]
\setlength{\tabcolsep}{0.2em}
\renewcommand{\arraystretch}{0.50}
\centering
\caption{Results on Digits.
We evaluate two backbones: a ViT-S-4/9-420 (a small Vision Transformer with $4\!\times\!4$ patch size, 9 layers, and 420-dimensional embeddings) and a lightweight VGG-style CNN (three Conv $\!\rightarrow\!$ ReLU $\!\rightarrow\!$ Pool blocks). 
}
\resizebox{0.85\textwidth}{!}{
\begin{tabular}{ll|cccc||cccc||cccc||cccc||cccc}
\toprule
& & \multicolumn{20}{c}{{Target}} \\
\addlinespace[0.5ex]
\cline{3-22}
\addlinespace[0.5ex]
\multirow{2}{*}{{Model}} & \multirow{2}{*}{{Source}} & \multicolumn{4}{c||}{No fine-tuning}
& \multicolumn{4}{c||}{Standard fine-tuning}
& \multicolumn{4}{c||}{\textbf{FGPD fine-tuning}}
& \multicolumn{4}{c||}{\textbf{PGFD fine-tuning}}
& \multicolumn{4}{c}{\textbf{Full JFPD fine-tuning (ours)}} \\
\addlinespace[0.5ex]
\cline{3-22}
\addlinespace[0.5ex]
& & MNIST & SVHN & USPS & Syn
& MNIST & SVHN & USPS & Syn
& MNIST & SVHN & USPS & Syn
& MNIST & SVHN & USPS & Syn
& MNIST & SVHN & USPS & Syn \\
\midrule

\multirow{5}{*}{ViT-S}
& MNIST 
& 99.65 & 19.88 & 94.02 & 18.80  
& -- & 94.06 & 97.46 & 75.35 
& -- & 95.01 & 98.16 & 88.50 
& -- & 76.64 & 97.76 & 60.40 
& -- & \textbf{95.82} & \textbf{98.06} & \textbf{89.30} \\

& SVHN 
& 70.34 & 93.79 & 76.53 & 39.35  
& 99.49 & -- & 97.01 & 80.15 
& 99.71 & -- & 97.91 & 87.95 
& 95.63 & -- & 95.62 & 52.60 
& \textbf{99.63} & -- & \textbf{97.86} & \textbf{87.30} \\

& USPS 
& 54.17 & 16.15 & 96.66 & 17.15 
& 99.32 & 91.58 & -- & 66.15 
& 99.43 & 92.30 & -- & 77.50 
& 81.70 & 39.40 & -- & 23.65 
& \textbf{99.43} & \textbf{93.38} & -- & \textbf{79.90} \\

& Syn 
& 20.33 & 30.95 & 62.33 & 73.30 
& 99.29 & 91.92 & 96.66 & -- 
& 99.47 & 93.80 & 97.36 & -- 
& 69.96 & 35.60 & 78.03 & -- 
& \textbf{99.50} & \textbf{93.02} & \textbf{97.56} & -- \\

\midrule

\multirow{5}{*}{CNN}
& MNIST
& 99.61 & 20.76 & 85.70 & 21.40
& -- & 91.57 & 98.16 & 74.50
& -- & 92.63 & 98.46 & 88.75
& -- & 70.21 & 97.11 & 62.55
& -- & \textbf{93.89} & \textbf{98.31} & \textbf{93.75} \\

& SVHN
& 68.65 & 94.51 & 75.44 & 43.35
& 99.37 & -- & 97.36 & 81.90
& 99.50 & -- & 97.91 & 91.25
& 93.67 & -- & 90.88 & 57.00
& \textbf{99.58} & -- & \textbf{98.16} & \textbf{94.25} \\

& USPS
& 94.31 & 20.62 & 97.66 & 19.60
& 99.46 & 91.44 & -- & 70.50
& 99.56 & 91.78 & -- & 83.65
& 98.28 & 47.10 & -- & 39.85
& \textbf{99.60} & \textbf{93.36} & -- & \textbf{91.45} \\

& Syn
& 28.82 & 38.59 & 63.08 & 94.85
& 99.28 & 91.13 & 96.76 & --
& 99.44 & 91.40 & 97.76 & --
& 85.99 & 63.14 & 84.16 & --
& \textbf{99.49} & \textbf{93.67} & \textbf{97.81} & -- \\

\bottomrule
\end{tabular}
}
\label{tab:vit_cnn_digits_all}
\end{table*}

\begin{table}[tbp]
\setlength{\tabcolsep}{0.1em}
\renewcommand{\arraystretch}{0.50}
\centering
\caption{Comparison with SOTA methods on Office-Home.}
\label{tab:officehome_full}
\resizebox{\linewidth}{!}{
\begin{tabular}{l|cccccccccccc|c}
\toprule
Method & Ar2Cl & Ar2Pr & Ar2Re & Cl2Ar & Cl2Pr & Cl2Re & Pr2Ar & Pr2Cl & Pr2Re & Re2Ar & Re2Cl & Re2Pr & Avg. \\
\midrule
ResNet-50 \cite{he2016deep}& 44.9 & 66.3 & 74.3 & 51.8 & 61.9 & 63.6 & 52.4 & 39.1 & 71.2 & 63.8 & 45.9 & 77.2 & 59.4 \\
MinEnt \cite{minent_pr}& 51.0 & 71.9 & 77.1 & 61.2 & 69.1 & 70.1 & 59.3 & 48.7 & 77.0 & 70.4 & 53.0 & 81.0 & 65.8 \\
SAFN \cite{safn_iccv19}& 52.0 & 71.7 & 76.3 & 64.2 & 69.9 & 71.9 & 63.7 & 51.4 & 77.1 & 70.9 & 57.1 & 81.5 & 67.3 \\
CDAN+E \cite{cdanpe_nips}& 54.6 & 74.1 & 78.1 & 63.0 & 72.2 & 74.1 & 61.6 & 52.3 & 79.1 & 72.3 & 57.3 & 82.8 & 68.5 \\
DCAN \cite{dcan_aaai}& 54.5 & 75.7 & 81.2 & 67.4 & 74.0 & 76.3 & 67.4 & 52.7 & 80.6 & 74.1 & 59.1 & 83.5 & 70.5 \\
BNM \cite{bnm_cvpr}& 56.7 & 77.5 & 81.0 & 67.3 & 76.3 & 77.1 & 65.3 & 55.1 & 82.0 & 73.6 & 57.0 & 84.3 & 71.1 \\
SHOT \cite{shot_cvpr}& 57.1 & 78.1 & 81.5 & 68.0 & 78.2 & 78.1 & 67.4 & 54.9 & 82.2 & 73.3 & 58.8 & 84.3 & 71.8 \\
ATDOC-NA \cite{atdocna_cvpr}& 58.3 & 78.8 & 82.3 & 69.4 & 78.2 & 78.2 & 67.1 & 56.0 & 82.7 & 72.0 & 58.2 & 85.5 & 72.2 \\
CoWA-JMDS \cite{lee2022confidence}& 56.9 & 78.4 & 81.0 & 69.1 & 80.0 & 79.9 & 67.7 & 57.2 & 82.4 & 72.8 & 60.5 & 84.5 & 72.5 \\
Fixbi \cite{na2021fixbi}& 58.1 & 77.3 & 80.4 & 67.7 & 79.5 & 78.1 & 65.8 & 57.9 & 81.7 & 76.4 & 62.9 & 86.7 & 72.7 \\
DUET \cite{duet_nips}& 73.6 & 90.4 & 91.0 & 83.6 & 90.7 & 90.9 & 82.7 & 73.7 & 91.2 & 83.6 & 74.0 & 91.2 & 84.7 \\
\midrule
ViT-B \cite{dosovitskiy2020image}& 54.7 & 83.0 & 87.2 & 77.3 & 83.4 & 85.6 & 74.4 & 50.9 & 87.2 & 79.6 & 54.8 & 88.8 & 75.5 \\
CDTrans-B \cite{cdtrans_iclr}& 68.8 & 85.0 & 86.9 & 81.5 & 87.1 & 87.3 & 79.6 & 63.3 & 88.2 & 82.0 & 66.0 & 90.6 & 80.5 \\
TVT-B \cite{tvt_cvpr}& 74.9 & 86.8 & 89.5 & 82.8 & 88.0 & 88.3 & 79.8 & 71.9 & 90.1 & 85.5 & 74.6 & 90.6 & 83.6 \\
SSRT-B \cite{ssrt_cvpr}& 75.2 & 89.0 & 91.1 & 85.1 & 88.3 & 90.0 & 85.0 & 74.2 & 91.3 & 85.7 & 78.6 & 91.8 & 85.4 \\
DCST \cite{dcst_nn} & 76.1 & 90.6 & 91.3 & 86.5 & 91.4 & 92.3 & 84.5 & 73.3 & 91.7 & 86.0 & 74.4 & 93.2 & 86.0\\
TSGF-B \cite{tsgf_pr}& 81.2 & 92.6 & 92.3 & 86.4 & 92.7 & 92.1 & 84.4 & 79.8 & 93.3 & 87.2 & 82.1 & 94.2 & 88.2 \\
FFTAT-B \cite{fftat_wacv}& 83.2 & \textbf{92.9} & 95.2 & 91.1 & 93.5 & 95.2 & 89.7 & 85.0 & 94.9 & 93.0 & 87.5 & 95.9 & 91.4 \\
\midrule
\textbf{JFPD (ours)} & \textbf{86.9} & \textbf{92.9} & \textbf{95.4} & \textbf{91.4} & \textbf{94.0} & \textbf{95.6} & \textbf{90.2} & \textbf{86.6} & \textbf{95.2} & \textbf{93.1} & \textbf{88.4} & \textbf{96.5} & \textbf{92.2} \\
\bottomrule
\end{tabular}
}
\end{table}

\begin{table}[tbp]
\setlength{\tabcolsep}{0.2em}
\renewcommand{\arraystretch}{0.50}
\centering
\caption{Comparison with SOTA methods on  VisDA-17.}
\label{tab:visda_comparison}
\resizebox{0.99\linewidth}{!}{
\begin{tabular}{l|cccccccccccc|c}
\toprule
Method & plane & bcycl & bus & car & horse & knife & mcycl & person & plant & sktbrd & train & truck & Avg. \\
\midrule
DANN \cite{ganin2016domain}& 81.9 & 77.7 & 82.8 & 44.3 & 81.2 & 29.5 & 65.1 & 28.6 & 51.9 & 54.6 & 82.8 & 7.8 & 57.4 \\
MinEnt \cite{minent_pr}& 80.3 & 75.5 & 75.8 & 48.3 & 77.9 & 27.3 & 69.7 & 40.2 & 46.5 & 46.6 & 79.3 & 16.0 & 57.0 \\
BNM \cite{bnm_cvpr}& 89.6 & 61.5 & 76.9 & 55.0 & 89.3 & 69.1 & 81.3 & 65.5 & 90.0 & 47.3 & 89.1 & 30.1 & 70.4 \\
CDAN+E \cite{cdanpe_nips}& 85.2 & 66.9 & 83.0 & 50.8 & 84.2 & 74.9 & 88.1 & 74.5 & 83.4 & 76.0 & 81.9 & 38.0 & 73.9 \\
SAFN \cite{safn_iccv19}& 93.6 & 61.3 & 84.1 & 70.6 & 94.1 & 79.0 & 91.8 & 79.6 & 89.9 & 55.6 & 89.0 & 24.4 & 76.1 \\
CGDM \cite{cgdm_cvpr}& 93.7 & 82.7 & 73.2 & 68.4 & 92.9 & 94.5 & 88.7 & 82.1 & 93.4 & 82.5 & 86.8 & 49.2 & 82.3 \\
SHOT \cite{shot_cvpr}& 94.3 & 88.5 & 80.1 & 57.3 & 93.1 & 93.1 & 80.7 & 80.3 & 91.5 & 89.1 & 86.3 & 58.2 & 82.9 \\
CoWA-JMDS \cite{lee2022confidence}& 96.2 & 89.7 & 83.9 & 73.8 & 96.4 & 97.4 & 89.3 & 86.8 & 94.6 & 92.1 & 88.7 & 53.8 & 86.9 \\
FixBi \cite{na2021fixbi}& 96.1 & 87.8 & 90.5 & 90.3 & 96.8 & 95.3 & 92.8 & 88.7 & 97.2 & 94.2 & 90.9 & 25.7 & 87.2 \\
CAN \cite{kang2019contrastive}& 97.0 & 87.2 & 82.5 & 74.3 & 97.8 & 96.2 & 90.8 & 80.7 & 96.6 & 96.3 & 87.5 & 59.9 & 87.2 \\
\midrule
TVT-B \cite{tvt_cvpr}& 92.9 & 85.6 & 77.5 & 60.5 & 93.6 & 98.2 & 89.4 & 76.4 & 93.6 & 92.0 & 91.7 & 55.7 & 83.9 \\
CDTrans-B \cite{cdtrans_iclr}& 97.1 & 90.5 & 82.4 & 77.5 & 96.6 & 96.1 & 93.6 & 88.6 & 97.9 & 86.9 & 90.3 & 62.8 & 88.4 \\
SSRT-B \cite{ssrt_cvpr}& 98.9 & 87.6 & 89.1 & 84.8 & 98.3 & 98.7 & 96.3 & 81.1 & 94.9 & 97.9 & 94.5 & 43.1 & 88.8 \\
DCST \cite{dcst_nn} & 98.1 & 86.4 & 89.2 & 82.5 & 97.7 & 97.6 & 97.8 & 82.5 & 98.0 & 91.7 & 95.6 & 54.8 & 89.3 \\
TSGF-B \cite{tsgf_pr}& 98.9 & 92.9 & 87.7 & 71.2 & 98.4 & 98.6 & 93.1 & 84.3 & 97.3 & 96.1 & 95.4 & 70.0 & 90.3 \\
FFTAT-B \cite{fftat_wacv}& 99.7 & 98.5 & \textbf{93.1} & \textbf{81.1} & 99.8 & \textbf{99.5} & \textbf{97.8} & \textbf{89.6} & 95.7 & 99.8 & 98.7 & 72.4 & 93.8 \\
\midrule
\textbf{JFPD (ours)} & \textbf{100.0} & \textbf{98.8} & 93.0 & 76.2 & \textbf{99.9} & \textbf{99.5} & \textbf{97.8} & 88.5 & \textbf{97.8} & \textbf{99.9} & \textbf{99.2} & \textbf{80.9} & \textbf{94.3} \\
\bottomrule
\end{tabular}
}
\end{table}

\begin{table*}[tbp]
\setlength{\tabcolsep}{0.2em}
\renewcommand{\arraystretch}{0.50}
\centering
\caption{Performance comparison with SOTA methods on DomainNet across all domain shifts.}
\label{tab:domainnet}
\resizebox{\textwidth}{!}{
\begin{tabular}{l|cccccc|c||cccccc|c||cccccc|c||cccccc|c||cccccc|c}
\toprule
 & \multicolumn{7}{c||}{ResNet-101 \cite{he2016deep}} 
 & \multicolumn{7}{c||}{MIMTFL \cite{mimtfl_eccv}} 
 & \multicolumn{7}{c||}{CGDM \cite{cgdm_cvpr}}
 & \multicolumn{7}{c||}{MDD+SCDA \cite{mddpscda_iccv}}
 & \multicolumn{7}{c}{DSAN \cite{dsan_kbs}} \\
\midrule
 & clp & inf & pnt & qdr & rel & skt & Avg. 
 & clp & inf & pnt & qdr & rel & skt & Avg. 
 & clp & inf & pnt & qdr & rel & skt & Avg. 
 & clp & inf & pnt & qdr & rel & skt & Avg. 
 & clp & inf & pnt & qdr & rel & skt & Avg. \\
\midrule
clp & - & 19.3 & 37.5 & 11.1 & 52.2 & 41.1 & 32.2 
    & - & 15.1 & 35.6 & 10.7 & 51.5 & 43.1 & 31.2
    & - & 16.9 & 35.3 & 10.8 & 53.5 & 36.9 & 30.7
    & - & 20.4 & 43.3 & 15.2 & 59.3 & 46.5 & 36.9
    & - & 19.2 & 46.8 & 14.9 & 65.6 & 47.5 & 38.8 \\
inf & 30.2 & - & 31.2 & 3.6 & 44.0 & 27.9 & 27.4
    & 32.1 & - & 31.0 & 2.9 & 48.5 & 31.0 & 29.1
    & 27.8 & - & 28.2 & 4.4 & 48.2 & 22.5 & 26.2
    & 32.7 & - & 34.5 & 6.3 & 47.6 & 29.2 & 30.1
    & 44.6 & - & 40.7 & 10.9 & 56.8 & 39.4 & 38.5 \\
pnt & 39.6 & 18.7 & - & 4.9 & 54.5 & 36.3 & 30.8
    & 40.1 & 14.7 & - & 4.2 & 55.4 & 36.8 & 30.2
    & 37.7 & 14.5 & - & 4.6 & 59.4 & 33.5 & 30.0
    & 46.4 & 19.9 & - & 8.1 & 58.8 & 42.9 & 35.2
    & 53.7 & 18.7 & - & \textbf{12.0} & 67.1 & 45.5 & 39.4 \\
qdr & 7.0 & 0.9 & 1.4 & - & 4.1 & 8.3 & 4.3
    & 18.8 & 3.1 & 5.0 & - & 16.0 & 13.8 & 11.3
    & 14.9 & 1.5 & 6.2 & - & 10.9 & 10.2 & 8.7
    & 31.1 & 6.6 & 18.0 & - & 28.8 & 22.0 & 21.3
    & 34.3 & 11.1 & 27.3 & - & 34.5 & 29.6 & 27.4 \\
rel & 48.4 & 22.2 & 49.4 & 6.4 & - & 38.8 & 33.0
    & 48.5 & 19.0 & 47.6 & 5.8 & - & 39.4 & 22.1
    & 49.4 & 20.8 & 47.2 & 4.8 & - & 38.2 & 32.0
    & 55.5 & 23.7 & 52.9 & 9.5 & - & 45.2 & 37.4
    & 53.7 & 20.0 & 49.3 & 12.2 & - & 45.9 & 36.2 \\
skt & 46.9 & 15.4 & 37.0 & 10.9 & 47.0 & - & 31.4
    & 51.7 & 16.5 & 40.3 & 12.3 & 53.5 & - & 34.9
    & 50.1 & 16.5 & 43.7 & 11.1 & 55.6 & - & 35.4
    & 55.8 & 20.1 & 46.5 & 15.0 & 56.7 & - & 38.8
    & 54.9 & 19.2 & 47.2 & 14.4 & 66.7 & - & 40.5 \\
\midrule
Avg. & 34.4 & 15.3 & 31.3 & 7.4 & 40.4 & 30.5 & 26.6
     & 38.2 & 13.7 & 31.9 & 7.2 & 45.0 & 32.8 & 28.1
     & 36.0 & 14.0 & 32.1 & 7.1 & 45.5 & 28.3 & 27.2
     & 44.3 & 18.1 & 39.0 & 10.8 & 50.2 & 37.2 & 33.3
     & 48.2 & 17.6 & 42.3 & 12.9 & 58.1 & 41.6 & 36.8 \\
\midrule
 & \multicolumn{7}{c||}{ViT-B \cite{dosovitskiy2020image}}
 & \multicolumn{7}{c||}{SSRT-B \cite{ssrt_cvpr}} 
 & \multicolumn{7}{c||}{FFTAT-B \cite{fftat_wacv}} 
 & \multicolumn{7}{c||}{DSAN-B \cite{dsan_kbs}} 
 & \multicolumn{7}{c}{\textbf{JFPD (Ours)}} \\
\midrule
 & clp & inf & pnt & qdr & rel & skt & Avg. 
 & clp & inf & pnt & qdr & rel & skt & Avg. 
 & clp & inf & pnt & qdr & rel & skt & Avg. 
 & clp & inf & pnt & qdr & rel & skt & Avg. 
 & clp & inf & pnt & qdr & rel & skt & Avg. \\
\midrule
clp & - & 27.2 & 53.1 & 13.2 & 71.2 & 53.3 & 43.6
    & - & 33.8 & 60.2 & 19.4 & 75.8 & 59.8 & 49.8
    & - & \textbf{39.4} & \textbf{70.3} & \textbf{25.5} & 81.9 & 70.9 & \textbf{57.6}
    & - & 38.0 & 66.8 & 13.2 & 84.4 & \textbf{71.2} & 54.7
    & - & 38.0 & 66.9 & 21.1 & \textbf{87.6} & 65.0 & 55.7 \\
inf & 51.4 & - & 49.3 & 4.0 & 66.3 & 41.1 & 42.4
    & 55.5 & - & 54.0 & 9.0 & 68.2 & 44.7 & 46.3
    & \textbf{67.4} & - & 65.9 & \textbf{12.6} & 79.4 & 60.0 & 57.1
    & 65.7 & - & 65.8 & 9.8 & 84.0 & 61.6 & 57.4
    & 66.1 & - & \textbf{66.5} & 8.5 & \textbf{84.2} & \textbf{62.0} & \textbf{57.5} \\
pnt & 53.1 & 25.6 & - & 4.8 & 70.0 & 41.8 & 39.1
    & 61.7 & 28.5 & - & 8.4 & 71.4 & 55.2 & 45.0
    & \textbf{71.9} & \textbf{37.9} & - & 11.4 & 81.5 & 65.6 & \textbf{53.6}
    & 68.7 & 35.2 & - & 5.6 & \textbf{85.4} & \textbf{66.3} & 52.2
    & 69.1 & \textbf{37.9} & - & 9.5 & 81.5 & 65.6 & 52.7 \\
qdr & 30.5 & 4.5 & 16.0 & - & 27.0 & 19.3 & 19.5
    & 42.5 & 8.8 & 24.2 & - & 37.6 & 33.6 & 29.3
    & 43.5 & 12.4 & 28.9 & - & 41.5 & 32.3 & 31.7
    & 50.5 & \textbf{21.2} & \textbf{35.2} & - & 38.6 & \textbf{39.2} & \textbf{36.9}
    & \textbf{51.1} & 20.4 & 32.9 & - & \textbf{41.5} & 38.7 & \textbf{36.9} \\
rel & 58.4 & 29.0 & 60.0 & 6.0 & - & 45.8 & 39.9
    & 69.9 & 37.1 & 66.0 & 10.1 & - & 58.9 & 48.4
    & \textbf{77.7} & \textbf{37.2} & 74.3 & \textbf{14.2} & - & 64.2 & 53.5
    & 75.3 & 36.5 & 74.7 & 9.0 & - & \textbf{66.5} & 52.4
    & 75.8 & \textbf{37.2} & \textbf{75.3} & 8.1 & - & \textbf{66.5} & \textbf{52.6} \\
skt & 63.9 & 23.8 & 52.3 & 14.4 & 67.4 & - & 44.4
    & 70.6 & 32.8 & 62.2 & 21.7 & 73.2 & - & 52.1
    & 78.4 & 34.9 & 70.1 & \textbf{28.4} & 78.1 & - & 58.0
    & \textbf{81.3} & \textbf{36.3} & \textbf{72.6} & 14.9 & 85.3 & - & 58.1
    & 80.9 & 34.3 & 72.5 & 17.8 & \textbf{85.5} & - & \textbf{58.2} \\
\midrule
Avg. & 51.5 & 22.0 & 46.1 & 8.5 & 60.4 & 40.3 & 38.1
     & 60.0 & 28.2 & 53.3 & 13.7 & 65.3 & 50.4 & 45.2
     & 67.8 & 32.4 & 61.9 & \textbf{18.4} & 72.5 & 58.6 & 51.9
     & 68.3 & 33.4 & \textbf{63.0} & 10.5 & 75.5 & \textbf{61.0} & 52.0
     & \textbf{68.6} & \textbf{33.6} & 62.8 & 13.0 & \textbf{76.1} & 59.6 & \textbf{52.3} \\
\bottomrule
\end{tabular}
}
\end{table*}

\section{Experiment}

\subsection{Experimental Setup}
\label{sec:exp-setup}

\textbf{Datasets.}
We evaluate the proposed method on four widely used domain adaptation benchmarks.
\textit{(i) Digits} includes four datasets: MNIST~\cite{726791}, SVHN~\cite{netzer2011reading}, USPS~\cite{hull1994database}, and Synthetic Digits~\cite{roy2018effects}. 
Each dataset is used once as the source domain while the remaining datasets serve as target domains, resulting in 12 transfer tasks. 
We follow the standard evaluation protocol using the official train/test splits.
\textit{(ii) Office-Home}~\cite{venkateswara2017deep} contains four domains: Art, Clipart, Product, and Real-World, spanning 65 shared object categories with approximately 15.5K images in total.
\textit{(iii) VisDA-2017}~\cite{peng2017visda} is a large-scale synthetic-to-real object classification benchmark consisting of more than 280K synthetic images for the source domain and 55K real images for the target domain across 12 categories.
\textit{(iv) DomainNet}~\cite{peng2019moment} is a large-scale benchmark with 345 object categories across six domains: Clipart, Real, Sketch, Infograph, Painting, and Quickdraw. 
The dataset contains over 560K images. Following the standard protocol, we evaluate all source-target domain pairs across these domains.

\textbf{Setups.} 
On \textit{Digits}, our objective is to analyze architectural behavior under domain shift rather than to benchmark against prior methods. 
We evaluate a VGG-style CNN and a lightweight Vision Transformer (ViT-S). The CNN comprises three convolutional layers (32, 64, 128 channels) followed by two fully connected layers (256 and 10 units) with ReLU activations. The ViT-S uses a patch size of 4, 9 transformer layers, 28 attention heads, a 420-dimensional embedding, and an MLP head of size 88 with GELU activation and dropout rate 0.06.
Models are pretrained on each source domain for 60 epochs. The CNN uses batch size 64, learning rate $10^{-3}$, and weight decay $10^{-5}$, while ViT-S uses batch size 32, learning rate $10^{-4}$, and weight decay $2\times10^{-2}$ with cosine restarts every 22 epochs. Adaptation is performed for 30 epochs on unlabeled target data with learning rates $10^{-4}$ (CNN) and $5\times10^{-5}$ (ViT-S) under cosine scheduling.

For standard benchmarks, we follow established UDA protocols to ensure fair and direct comparison with prior work. We use ViT-B/16 models pretrained on ImageNet-1K, with classification heads adapted to the number of target classes.
\textit{Office-Home:} Models are pretrained on the source domain for 100 epochs (batch size 128, learning rate $10^{-4}$, weight decay $5\!\times\!10^{-5}$, cosine restarts every 40 epochs), followed by 50 epochs of adaptation on all unlabeled target samples (batch size 128, learning rate $5\!\times\!10^{-5}$, cosine restarts every 20 epochs).
\textit{VisDA-2017:} Source training is performed for 30 epochs (batch size 128, learning rate $10^{-3}$, cosine annealing), followed by 100 epochs of adaptation on unlabeled target data (batch size 128, learning rate $10^{-4}$).
\textit{DomainNet:} Models are trained following standard protocols, with source pretraining followed by adaptation on all unlabeled target samples using stochastic optimization and cosine scheduling.
All methods are evaluated with comparable backbone architectures and training protocols while preserving their original components. Baseline results are obtained from official implementations or prior work when available. Hyperparameters are selected using source validation or standard settings, without access to target labels. 
For JFPD, target batch sizes range from 128 to 512, and source prototypes are estimated from class-balanced subsets with 32 samples per class (relying only on source supervision).  

\textbf{Baselines.}
We compare the proposed method with several representative domain adaptation approaches spanning different methodological categories.
For all methods, the same backbone architectures and training protocols are used to ensure fair comparison.
\textit{(i) Feature-alignment methods.} We compare against classical feature-level domain adaptation approaches that learn domain-invariant representations by aligning feature distributions, such as DANN~\cite{ganin2016domain}. 
\textit{(ii) Adversarial domain adaptation.}
We include adversarial alignment methods that learn domain-invariant features through domain discriminators, including 
DANN~\cite{ganin2016domain} and CDAN~\cite{cdanpe_nips}. 
\textit{(iii) Prediction-aware and confidence-based adaptation.}
To evaluate reliability-aware adaptation strategies, we compare with methods that use prediction confidence or entropy during adaptation, including 
DUET~\cite{duet_nips} and 
CoWA-JMDS~\cite{lee2022confidence}. 
\textit{(iv) Strong recent baselines.}
We further compare against recent high-performing methods that achieve strong results on large-scale benchmarks such as DomainNet, including FFTAT-B \cite{fftat_wacv}, DSAN~\cite{dsan_kbs} and 
DCST~\cite{dcst_nn}. 
\textit{(v) Ablation baselines.}
To analyze the contributions of individual components in the proposed framework, we additionally evaluate two variants: 
\textit{FGPD}, which uses only feature-guided discrepancy, and 
\textit{PGFD}, which uses only prediction-guided discrepancy. 
The full model is denoted as \textit{JFPD}.

\subsection{Comparisons with the State of the Art}


\textbf{Overall performance.} JFPD consistently achieves the best results across all benchmarks. On Office-Home (Table \ref{tab:officehome_full}), JFPD attains an average accuracy of 92.2\%, outperforming the strongest prior method FFTAT-B (91.4\%) and substantially surpassing earlier methods such as CDAN+E (68.5\%) and SHOT (71.8\%). The gains are consistent across most transfer tasks, including challenging shifts such as Ar$\rightarrow$Cl and Pr$\rightarrow$Cl, indicating robust cross-domain generalization.
On VisDA-2017 (Table \ref{tab:visda_comparison}), JFPD achieves a new state of the art with an average accuracy of 94.3\%, improving over FFTAT-B (93.8\%) and TSGF-B (90.3\%). Notable improvements are observed in difficult categories such as \emph{truck} and \emph{plant}, where domain discrepancies are more pronounced, demonstrating the effectiveness of trust-aware discrepancy modeling in large-scale synthetic-to-real adaptation.
On DomainNet (Table \ref{tab:domainnet}), JFPD achieves the best overall performance with an average accuracy of 52.3\%, outperforming strong baselines such as DSAN-B (52.0\%) and FFTAT (51.9\%). The improvements are consistent across diverse domain pairs, particularly for visually complex domains (\eg, \emph{real} and \emph{sketch}), highlighting robustness under large domain variability.

\textbf{Comparison across method categories.} Compared with feature-alignment methods (\eg, DANN), JFPD yields substantial improvements, indicating that aligning feature distributions alone is insufficient under significant domain shifts. By jointly modeling feature and prediction discrepancies, JFPD captures both structural and semantic mismatch.
Relative to prediction-aware and confidence-based methods (\eg, CoWA-JMDS), JFPD consistently improves performance. While these methods exploit prediction confidence, they typically treat feature alignment implicitly. In contrast, JFPD explicitly integrates prediction uncertainty with feature alignment via cross-guided trust, leading to more reliable discrepancy estimation.
Compared with recent transformer-based methods (\eg, FFTAT-B, DSAN-B, and DCST), JFPD still achieves superior results under comparable backbones. This suggests that the gains arise from the proposed trust-aware joint discrepancy formulation rather than architectural differences.

\begin{figure*}[tbp]
  \centering
  \includegraphics[width=\linewidth]{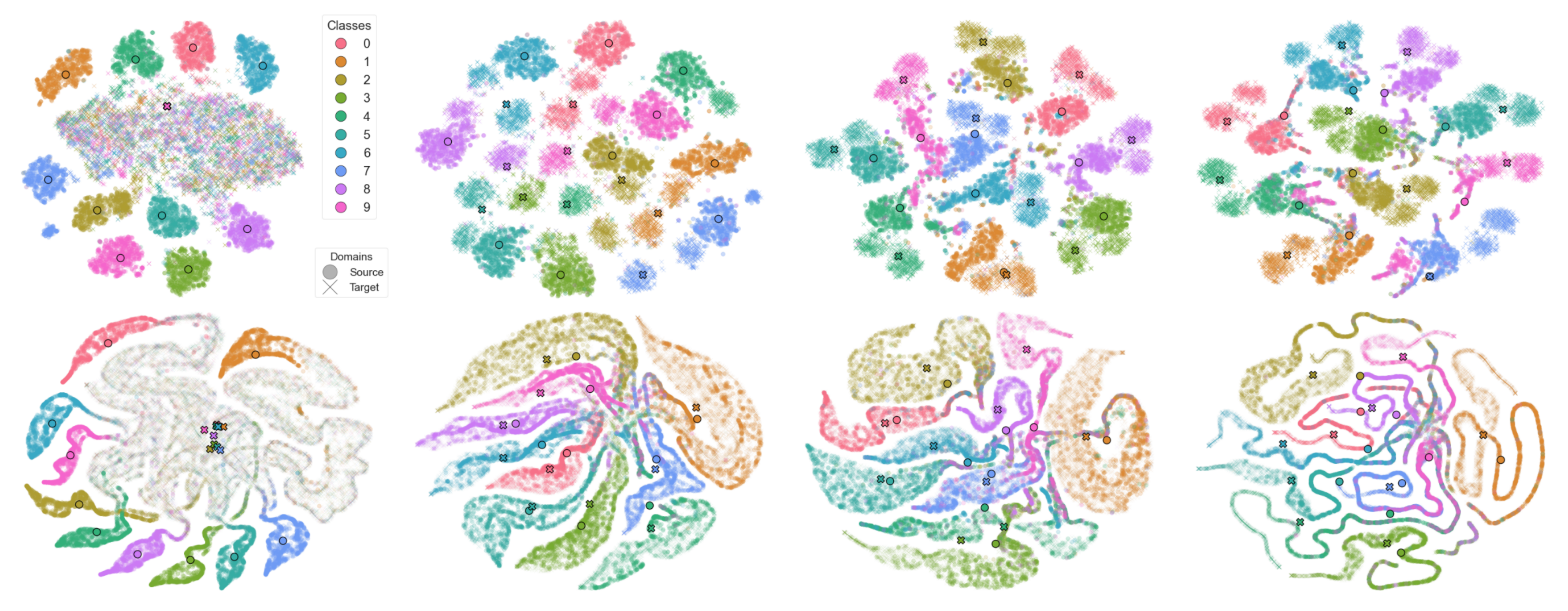}
  \caption{t-SNE visualizations of MNIST (source, dots) and SVHN (target, crosses). 
  All subplots share the same legend.
  The first row shows learned feature representations; the second row shows softmax predictions. From left to right: (i) pretrained MNIST model (no SVHN fine-tuning), (ii) standard fine-tuning, (iii) our feature-guided prediction divergence, and (iv) our full Joint Feature-Prediction Discrepancy (JFPD) method.
  In the feature space (top row), our method (columns iii-iv) produces the most compact, class-consistent clusters across domains, indicating strong structural alignment. In the prediction space (bottom row), JFPD yields clearer decision boundaries, improved semantic consistency, and better confidence calibration, key for reliable cross-domain prediction.
  These results highlight the benefit of jointly aligning features and predictions with trust-aware guidance, especially for semantically ambiguous or uncertain classes. Zoom in for a clearer view.
  }
  \label{fig:comp}
\end{figure*}

\subsection{Analysis and Discussion}

\textbf{t-SNE visualization analysis.} Fig.~\ref{fig:comp} visualizes the feature (top row) and prediction (bottom row) spaces for MNIST (source) and SVHN (target). 
In the feature space, the pretrained model shows severe source-target misalignment with fragmented, overlapping clusters. Standard fine-tuning reduces this gap slightly, but clusters remain loosely formed. FGPD improves alignment by incorporating prediction signals, yet lacks consistent class separation. JFPD yields the most compact, class-coherent clusters across domains, with well-separated class boundaries, indicating strong structural alignment and improved cross-domain representation learning.
In the prediction space, JFPD again stands out by producing clearer, more semantically aligned decision boundaries and fewer misclassifications. Predictions are confident and consistent, even for ambiguous classes (\eg, 3, 4, 9), whereas competing methods exhibit fuzzy boundaries and scattered predictions.
These results demonstrate that JFPD’s joint alignment of features and predictions, guided by trust, enables robust domain adaptation. 

\textbf{Backbone performance.} We evaluate JFPD on both lightweight and high-capacity models: a VGG-style CNN and ViT-S on Digits (Table~\ref{tab:vit_cnn_digits_all}).
JFPD consistently outperforms standard fine-tuning across all settings, with large gains on CNN. For example, CNN model achieves gains of +19.25\%, +12.35\%, and +9.55\% for MNIST, SVHN, and USPS to Syn, respectively. While ViT-S shows some improvement, its gains are smaller, likely because ViTs are better suited for tasks involving complex visual details, such as when domains contain background clutter (like in SVHN). Additionally, the relatively simple nature of Digits means they don't fully leverage the strengths of ViTs.
Notably, PGFD underperforms relative to FGPD and JFPD, especially with ViTs. This suggests that relying solely on feature-level alignment without prediction-level grounding (\eg, class semantics) can lead to suboptimal adaptation in expressive models.

\textbf{Joint feature-prediction perspective.}
Fig.~\ref{fig:domain-gap-error} visualizes the relationship between JFPD and the resulting target-domain error across different domain shifts on Digits. Each point represents a source-to-target adaptation pair, and points sharing the same color correspond to the same target domain.

Two important observations emerge. First, a clear positive correlation exists between mean JFPD and target error, indicating that larger feature-prediction discrepancies correspond to more difficult adaptation scenarios. This behavior confirms that JFPD effectively captures the magnitude of domain shift and can serve as a reliable indicator of adaptation difficulty.
Second, the relationship becomes substantially more structured after applying JFPD-based training (Fig.~\ref{subfig2}) compared with standard fine-tuning (Fig.~\ref{subfig1}). Under standard fine-tuning, the discrepancy values are scattered and less predictive of target performance. In contrast, the trust-aware JFPD objective produces a clearer alignment between discrepancy magnitude and target error, demonstrating that the joint feature-prediction formulation provides a more meaningful characterization of cross-domain mismatch.
This behavior is consistent with the theoretical properties of the proposed formulation. As shown in Lemma~\ref{lemma:trust_suppression}, the trust-aware weighting suppresses unreliable target samples and yields conservative yet informative estimates of domain discrepancy. By jointly modeling representation divergence and prediction disagreement while modulating their influence through trust scores, JFPD provides a stable and interpretable measure of domain shift.

\textbf{Domain-specific observations.} Fig.~\ref{fig:domain-gap-error} also shows consistent patterns across different target domains. Transfers toward more complex or noisy domains (\eg, Synthetic Digits) tend to exhibit higher JFPD values and correspondingly larger target errors, reflecting the increased difficulty of aligning representations under significant appearance shifts. Conversely, transfers between visually similar domains (\eg, MNIST and USPS) yield smaller discrepancies and lower errors.
These observations highlight the practical value of JFPD as a diagnostic tool for understanding cross-domain adaptation behavior. 

\textbf{Dataset transfer difficulty.} Digits datasets differ in visual style and noise characteristics. Transfers involving large style discrepancies (\eg, MNIST $\rightarrow$ SVHN or USPS $\rightarrow$ Synthetic) therefore present the greatest adaptation challenges (Fig.~\ref{fig:domain-gap-error}). In these scenarios, JFPD improves performance by jointly aligning representation structure and prediction semantics while suppressing unreliable correspondences through trust-aware weighting. As a result, the proposed framework consistently reduces adaptation difficulty across challenging domain shifts.

\begin{figure}[tbp]
\centering

\subfloat[Standard fine-tuning]{
\includegraphics[width=0.485\linewidth]{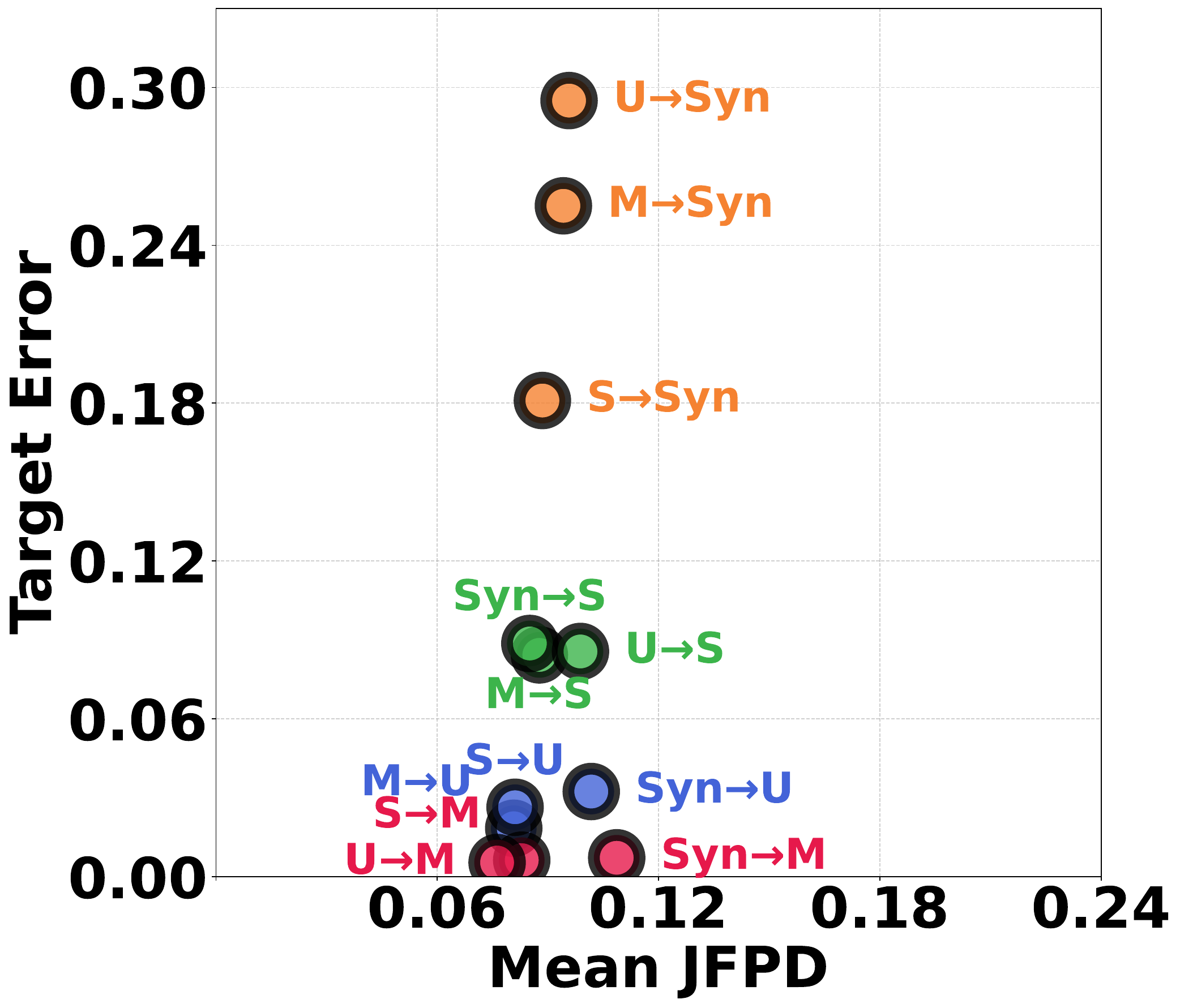}\label{subfig1}}
\subfloat[Our JFPD fine-tuning]{
\includegraphics[width=0.485\linewidth]{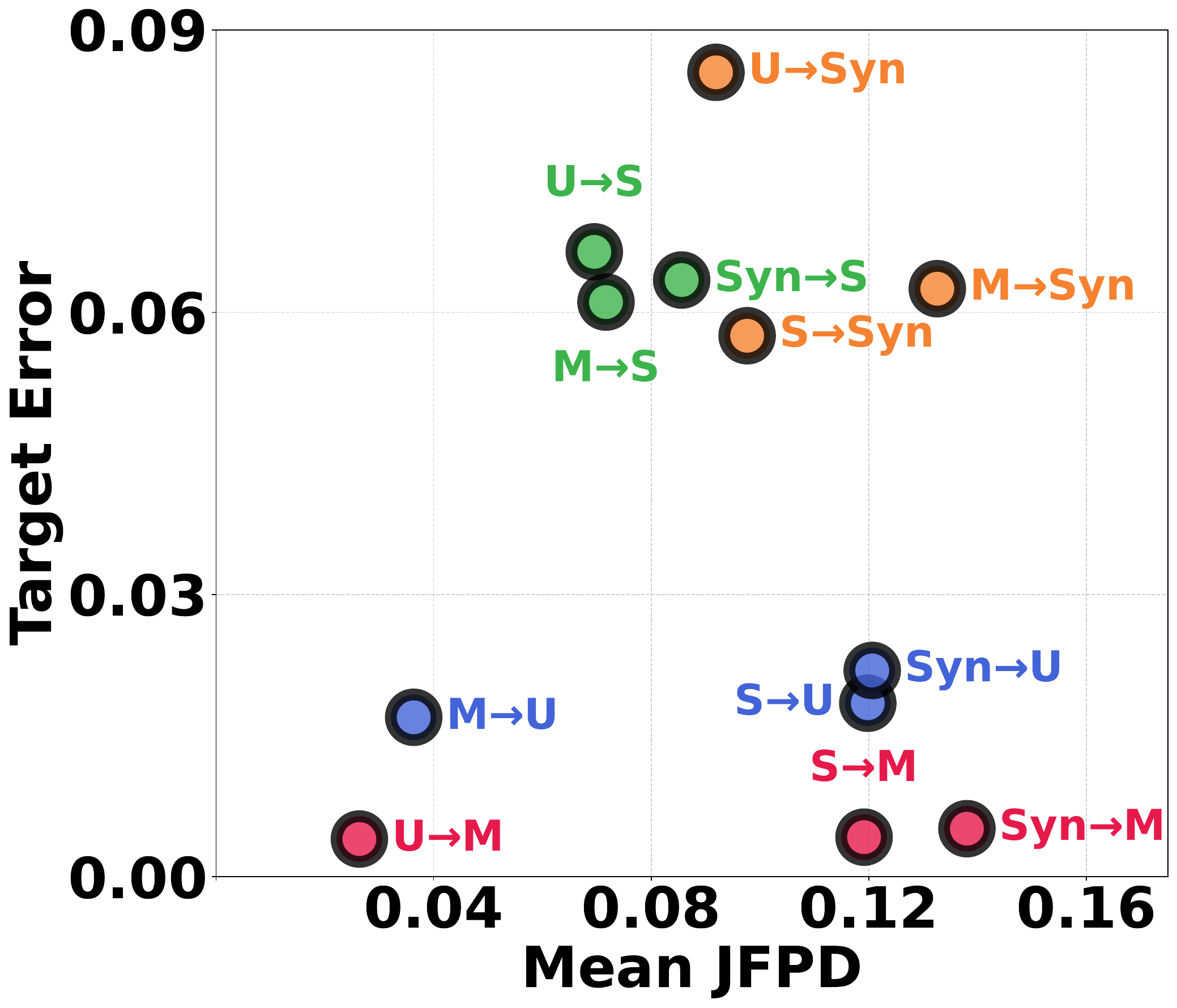}\label{subfig2}}
\caption{Mean JFPD \vs~ target error across domain shifts on Digits. 
In each plot, points with the same color share the same target domain.
Each point denotes a source-to-target adaptation. The positive correlation shows \textbf{JFPD's sensitivity to domain shifts}: higher JFPD values align with greater target error, highlighting its potential as \textit{a diagnostic tool} for assessing adaptation difficulty beyond standard fine-tuning.}
\label{fig:domain-gap-error}
\end{figure}

\begin{figure}[tbp]
\centering
\subfloat[Gap \vs epochs.\label{fig:gap}]{
\includegraphics[width=0.49\linewidth]{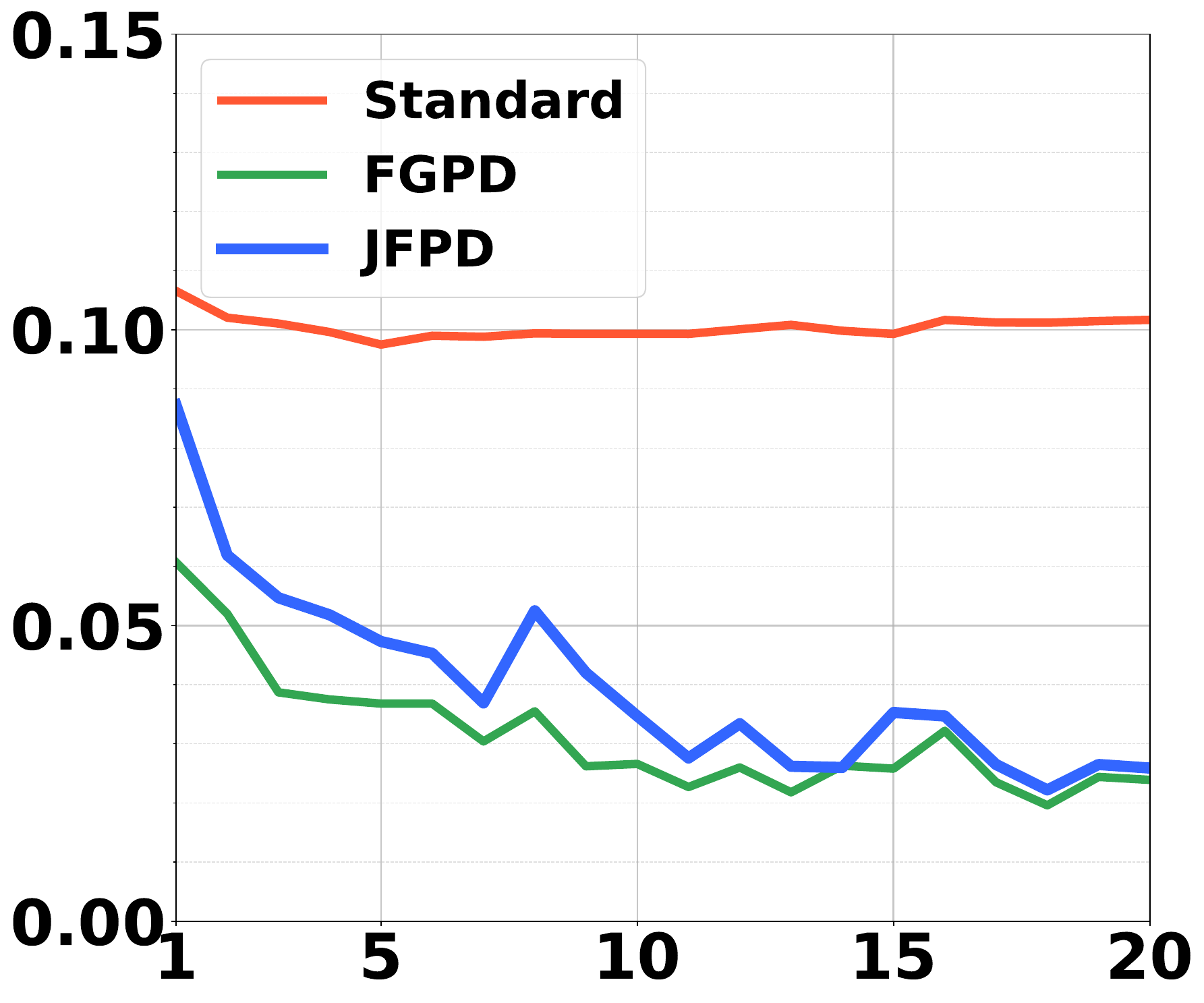}}
\subfloat[$\alpha$ sensitivity.\label{hyper:alpha}]{
\includegraphics[width=0.49\linewidth]{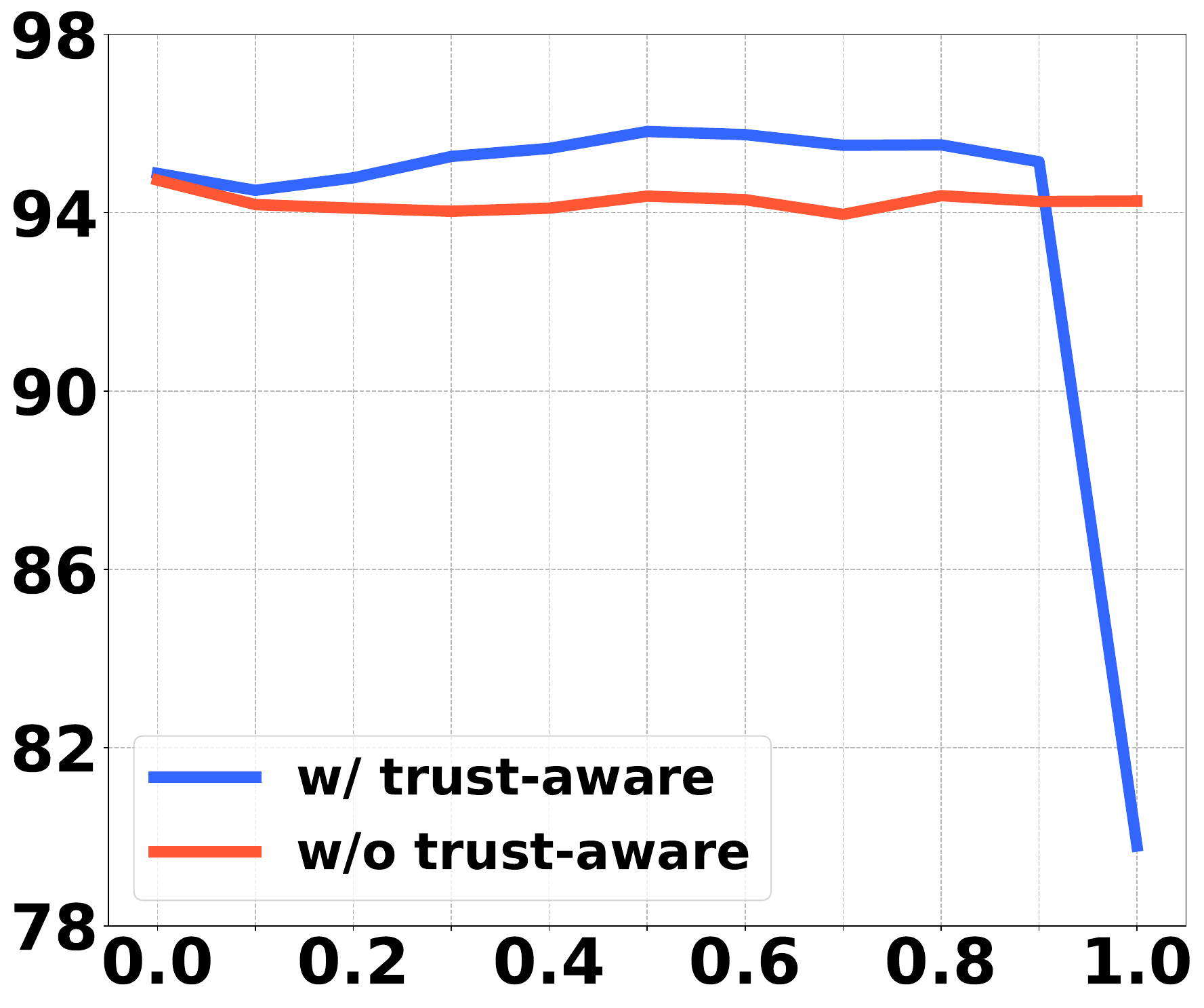}}
\caption{
Behavior of the proposed JFPD during adaptation.
(a) Domain discrepancy across training epochs. While standard fine-tuning shows minimal reduction of the domain gap, both FGPD and JFPD progressively align source and target distributions. JFPD achieves the lowest discrepancy throughout training, indicating that joint feature-prediction alignment with trust-aware weighting leads to more stable adaptation.
(b) Sensitivity to the balance parameter $\alpha$. Performance remains stable across a wide range of $\alpha$ values, demonstrating the robustness of JFPD. The sharp degradation at $\alpha=1.0$ highlights the limitation of prediction-only alignment and confirms the importance of balancing feature and prediction discrepancies.
}
\label{fig:ablation}
\end{figure}

\subsection{Ablation Study}

\textbf{Trust-aware gap reduction.}
Fig.~\ref{fig:gap} shows the evolution of the domain discrepancy during training for standard fine-tuning, FGPD, and the proposed JFPD. Standard fine-tuning shows almost no reduction in the domain gap, indicating that cross-entropy optimization alone is insufficient to align source and target distributions. In contrast, both FGPD and JFPD progressively reduce the discrepancy over training epochs, demonstrating that explicitly modeling domain divergence provides a stronger adaptation signal.
Among the compared methods, JFPD consistently achieves the lowest gap throughout training. This improvement arises from the joint modeling of feature-level and prediction-level discrepancies, which allows the adaptation process to capture both representation alignment and semantic consistency. Moreover, the trust-aware weighting mechanism further stabilizes the optimization by emphasizing reliable samples and suppressing noisy or uncertain ones, leading to \textit{more consistent gap reduction over time}.

\textbf{Effect of discrepancy balance.}
Fig.~\ref{hyper:alpha} evaluates the sensitivity of JFPD to the discrepancy balance parameter $\alpha$ on the MNIST$\rightarrow$SVHN transfer task. The parameter $\alpha$ controls the relative importance of feature discrepancy and prediction discrepancy.
At $\alpha=0$, the objective reduces to feature-guided alignment (FGPD). In this case, the model relies solely on feature similarity, achieving an accuracy of $94.9\%$. While feature alignment provides useful structural information, it does not fully capture semantic consistency between domains.
At the opposite extreme, $\alpha=1$, the objective relies entirely on prediction discrepancy (PGFD). This setting leads to a dramatic performance drop to $79.8\%$, indicating that prediction-only alignment is unstable when target predictions are unreliable. Without feature-level constraints, prediction divergence alone can amplify noisy target signals.
Between these extremes, performance remains consistently strong for $\alpha \in [0.1,0.9]$, reaching the best performance around $\alpha=0.5$-$0.6$ ($95.8\%$). This behavior demonstrates that feature-level and prediction-level discrepancies provide complementary signals for domain alignment. Balancing the two enables the model to simultaneously capture representation similarity and semantic agreement, leading to more robust adaptation.

\textbf{Role of trust-aware weighting.} To further analyze the impact of the trust-aware design, we evaluate JFPD \textit{without the trust factors} $\phi$ and $\psi$, which reduces the formulation to \textit{a simple unweighted combination of feature and prediction discrepancies}. Under this setting, performance follows a similar trend as $\alpha$ varies from $0$ to $1$, but remains consistently lower (with the exception of $\alpha \!=\! 1$) across the evaluated $\alpha$ values.
Although joint discrepancies still provide reasonable alignment signals, the absence of trust-aware weighting limits the model's ability to distinguish reliable from unreliable target samples. By contrast, incorporating the uncertainty-aware and semantic-alignment trust factors allows JFPD to emphasize confident and semantically consistent samples during adaptation. This reliability-aware mechanism reduces the influence of noisy correspondences and leads to more stable optimization and improved target performance.


\section{Conclusion}

We presented \emph{Joint Feature-Prediction Discrepancy (JFPD)}, a trust-aware domain adaptation framework that jointly integrates feature-level and prediction-level divergences, modulated by sample-wise trust. Unlike prior single-view or uniform weighting approaches, JFPD explicitly models the interplay between representations and classifier outputs, emphasizing reliable target samples through complementary uncertainty- and alignment-based trust estimates.
Our framework introduces a principled discrepancy metric and a corresponding trust-aware adaptation objective, enabling stable and selective alignment of target samples with source prototypes. Experiments on digit and object recognition benchmarks show that JFPD not only improves adaptation accuracy but also provides interpretable insights into how reliable samples drive cross-domain alignment.
Future work includes extending JFPD to multi-source, continual, and cross-modal domain adaptation scenarios, and integrating trust-aware alignment with large-scale pretrained models, including vision-language architectures, where calibrated reliability estimates are critical for robust transfer.


\bibliographystyle{IEEEtran}
\bibliography{research}

\vskip -1\baselineskip plus -1fil

\begin{IEEEbiography}[{\includegraphics[trim=0 120 0 50, width=1in,height=1.25in,clip,keepaspectratio]{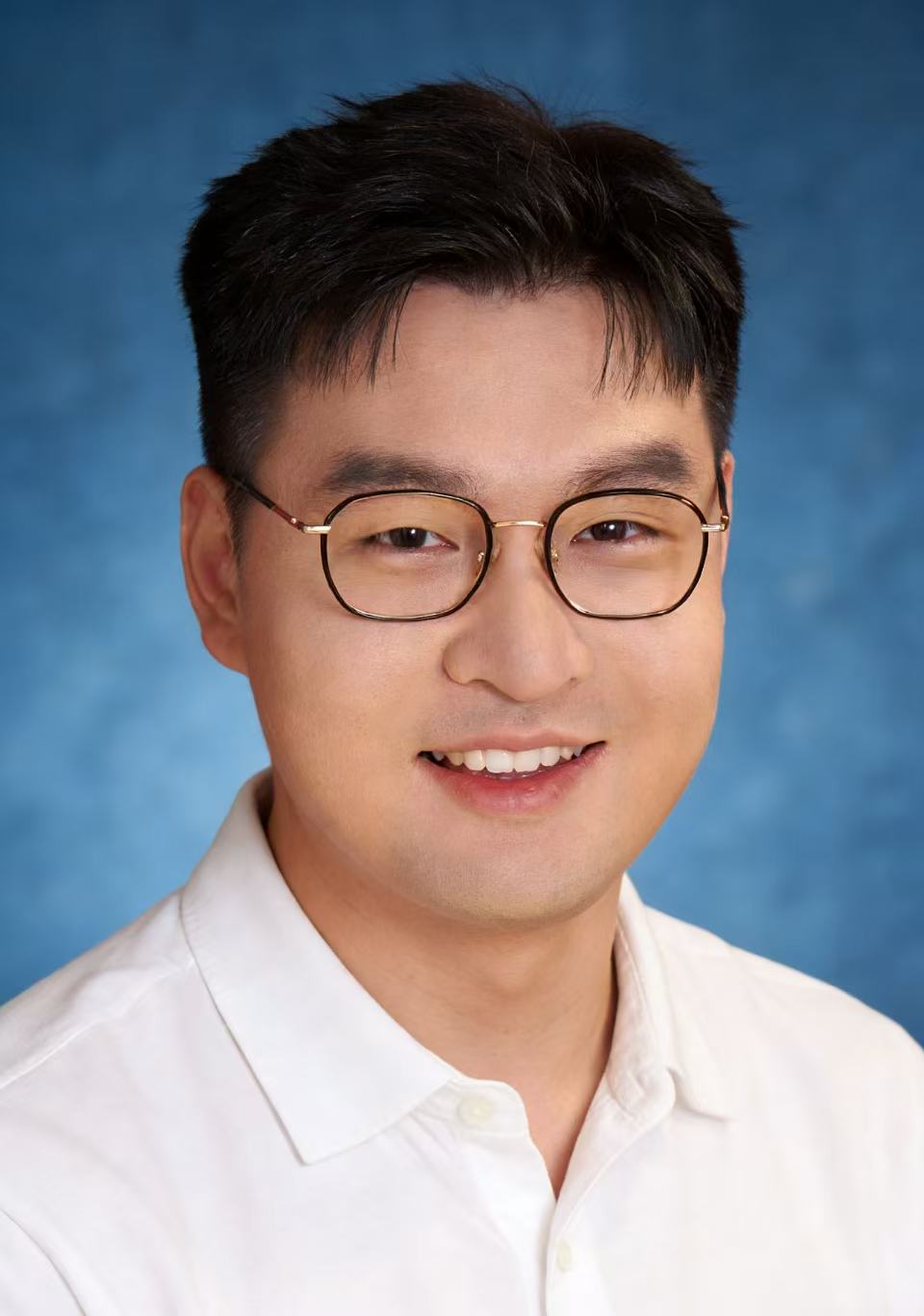}}]{Xi Ding} received the M.S. degree in machine learning from the Australian National University (ANU) in 2025. He is currently a research intern at Carnegie Mellon University (CMU) and has previously conducted research at the Australian Research Council (ARC) Research Hub and the Temporal Intelligence and Motion Extraction (TIME) Lab. His research spans machine learning, computer vision, large language models, kernel and tensor methods, and domain adaptation. He has published in top-tier venues, including NeurIPS, ICLR, and AAAI, and actively contributes to the research community as a reviewer and coordinator.
\end{IEEEbiography}

\vskip -1\baselineskip plus -1fil

\begin{IEEEbiography}
[{\includegraphics[width=1in,height=1.25in,clip,keepaspectratio]{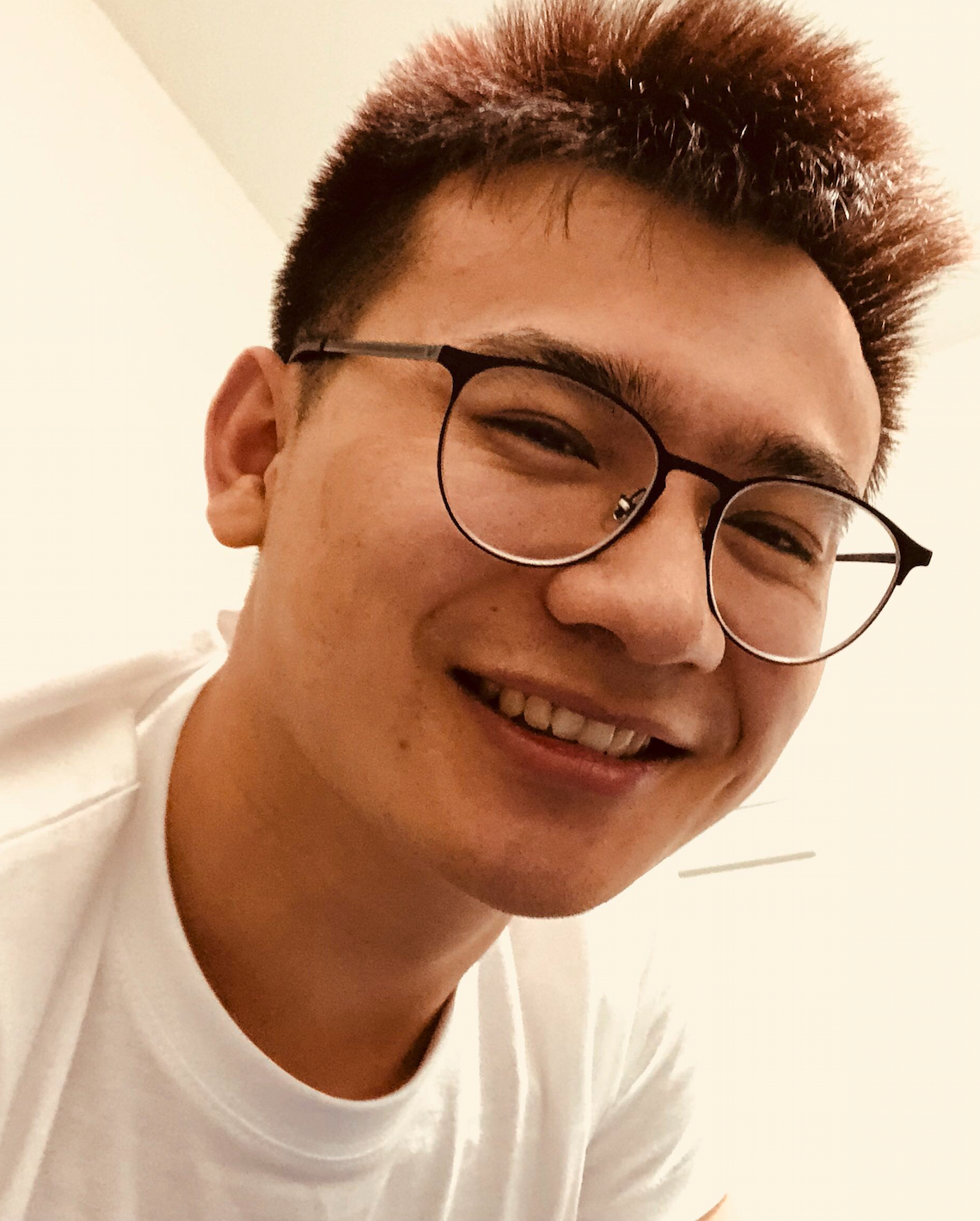}}]{Lei Wang} received his M.E. in Software Engineering from the University of Western Australia (UWA) in 2018 and his Ph.D. in Engineering and Computer Science from the Australian National University (ANU) in 2023. He is a Research Fellow in the School of Electrical and Electronic Engineering at Griffith University and a Visiting Scientist with Data61/CSIRO. He leads the Temporal Intelligence and Motion Extraction (TIME) Lab at Griffith University. He previously held research positions at ANU, UWA, and Data61/CSIRO. His research focuses on motion-, data-, and model-centric approaches to video action recognition and anomaly detection. He has authored numerous first-author papers in top-tier venues, including CVPR, ICCV, ECCV, ACM Multimedia, NeurIPS, ICLR, ICML, AAAI, TPAMI, IJCV, and TIP, and received the Sang Uk Lee Best Student Paper Award at ACCV 2022. He serves as an Area Chair for ACM Multimedia 2024-2025, ICASSP 2025, and ICPR 2024, and was recognized as an Outstanding Area Chair at ACM Multimedia 2024.
\end{IEEEbiography}

\vskip -1\baselineskip plus -1fil

\begin{IEEEbiography}
[{\includegraphics[trim=0 0 0 300, width=1in,height=1.25in,clip,keepaspectratio]{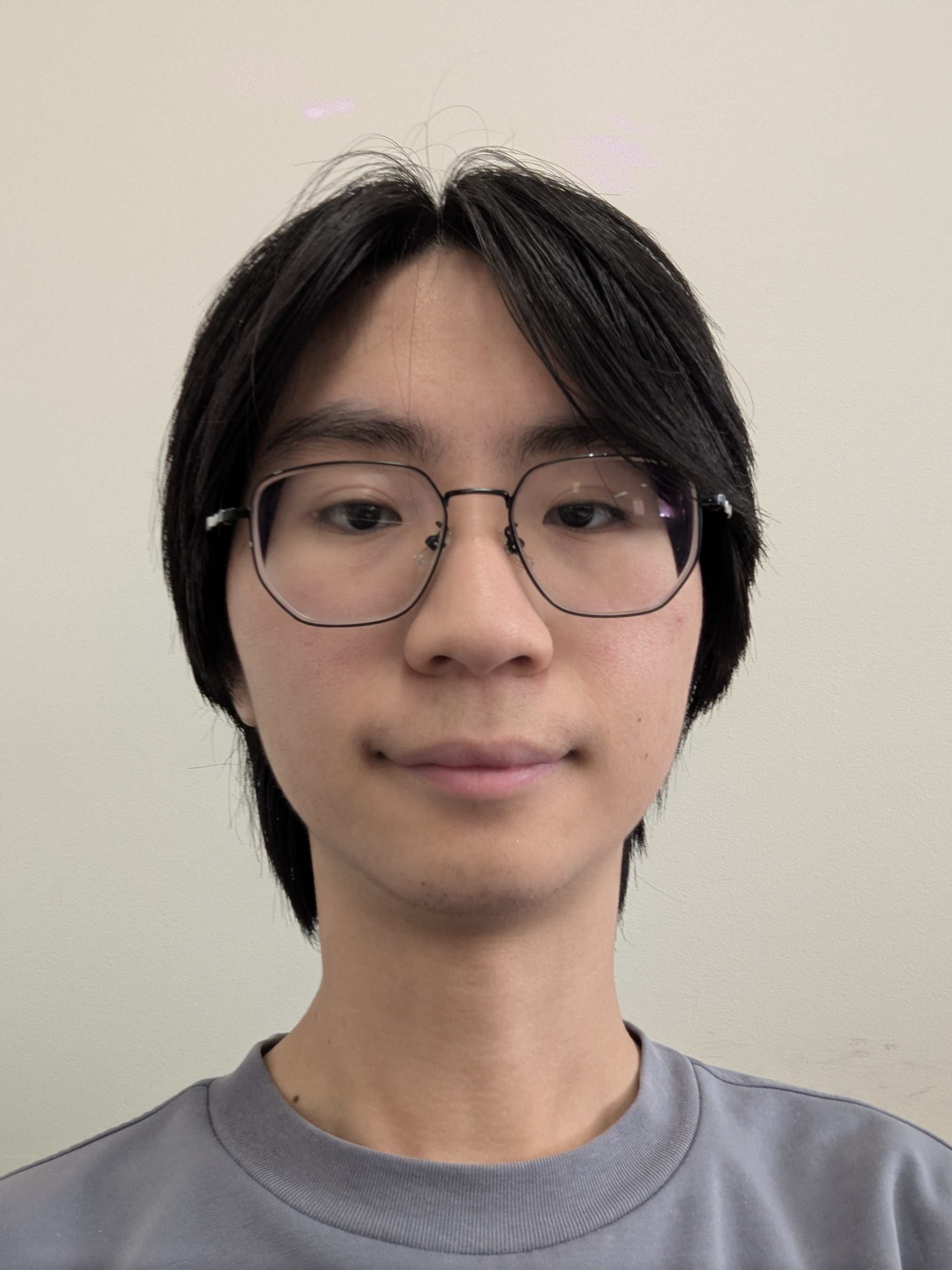}} 
]{Syuan-Hao Li} received the B.S. degree in Computer Science, National Taitung University (NTTU), Taiwan, in 2025. He is currently a Ph.D. pathway student at Griffith University and a research intern at the Temporal Intelligence and Motion Extraction (TIME) Lab. He serves as a workshop coordinator for TIME 2026: the 2nd International Workshop on Transformative Insights in Multi-faceted Evaluation, hosted at the Web Conference (WWW 2026). His research interests include temporal modeling, multimodal intelligence, and fine- and ultra-fine-grained visual understanding.
\end{IEEEbiography}

\vskip -1\baselineskip plus -1fil

\begin{IEEEbiography}[{\includegraphics[trim=0 0 0 0, width=1in,height=1.25in,clip,keepaspectratio]{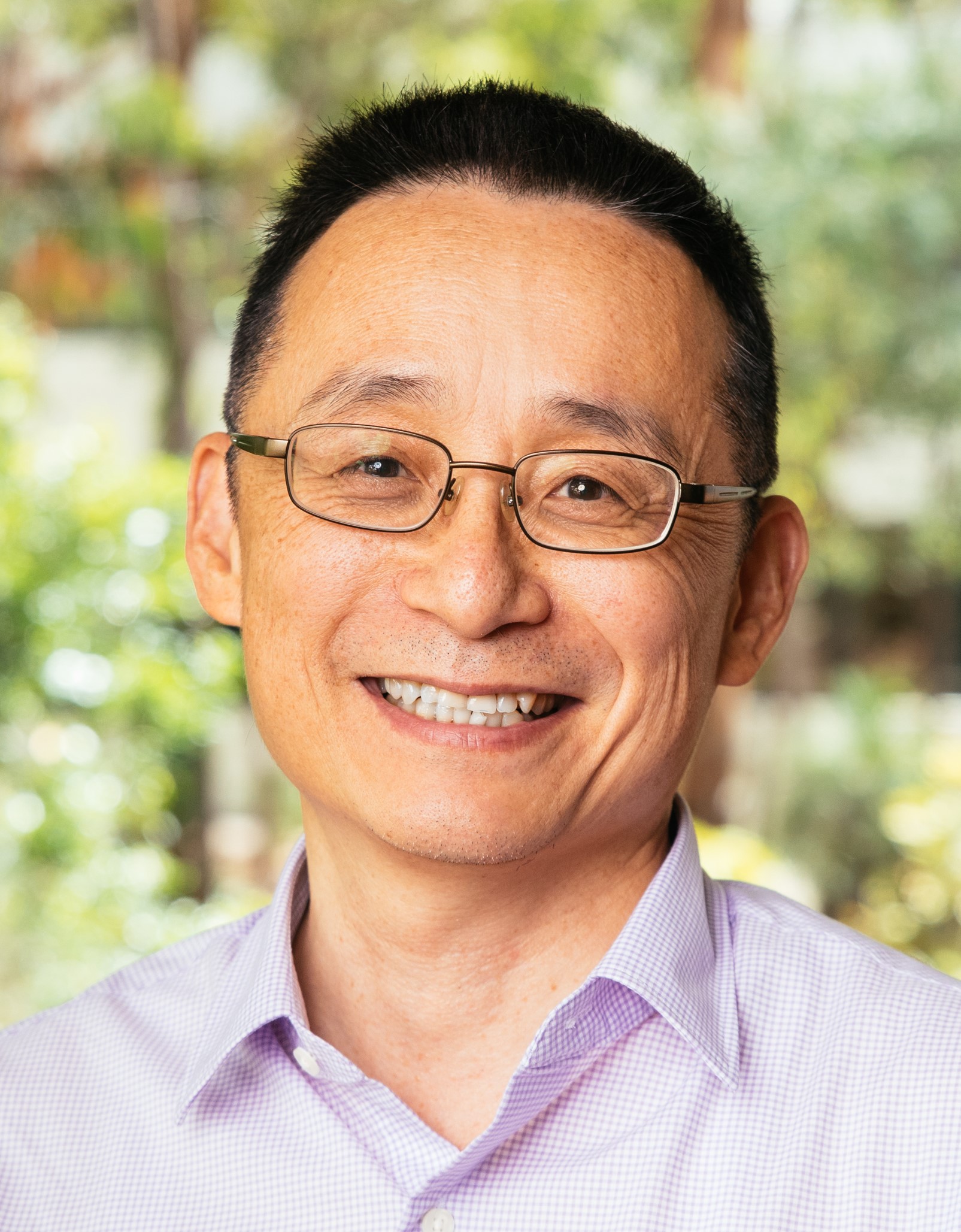}}
]{Yongsheng Gao} received the BSc and MSc degrees in Electronic Engineering from Zhejiang University, China, in 1985 and 1988, respectively, and the PhD degree in Computer Engineering from Nanyang Technological University, Singapore. He is currently a Professor with the School of Engineering and Built Environment, Griffith University, and Director of the ARC Research Hub for Driving Farming Productivity and Disease Prevention, Australia. He was previously the Leader of the Biosecurity Group at the Queensland Research Laboratory, National ICT Australia (ARC Centre of Excellence), a consultant at Panasonic Singapore Laboratories, and an Assistant Professor at Nanyang Technological University. His research interests include smart farming, machine vision for agriculture, biosecurity, face recognition, biometrics, image retrieval, computer vision, pattern recognition, environmental informatics, and medical imaging. He is a recipient of the 2025 ARC Industry Laureate Fellow.\end{IEEEbiography}



\end{document}